\begin{document}
\title{SLSGD: Secure and Efficient Distributed On-device Machine Learning}
\titlerunning{SLSGD: Secure and Efficient Distributed On-device Machine Learning}
%
\author{Cong Xie\inst{1}\and
Sanmi Koyejo\inst{1}\and
Indranil Gupta\inst{1}}
%
%
\institute{University of Illinois at Urbana-Champaign
\email{\{cx2,sanmi,indy\}@illinois.edu}}
\maketitle              

\def\Blue{\color{blue}}
\def\Purple{\color{purple}}

\def\E{{\mathbb{E}}}

\def\R{{\mathbb{R}}}

\def\Ncal{\mathcal{N}}
\def\argmax{\mathop{\rm argmax}}
\def\argmin{\mathop{\rm argmin}}
\providecommand{\abs}[1]{\lvert#1\rvert}
\providecommand{\norm}[2]{\lVert#1\rVert_{#2}}

\def\fold{\mathrm{fold}}
\def\index{\mathrm{index}}
\def\sgn{\mathrm{sgn}}
\def\tr{\mathrm{tr}}
\def\rk{\mathrm{rank}}
\def\diag{\mathsf{diag}}
\def\const{\mathrm{Const}}
\def\dg{\mathsf{dg}}
\def\st{\mathsf{s.t.}}
\def\etal{{\em et al.\/}\,}
\def\prox{\mathrm{prox}^h_\gamma}
\newcommand{\indep}{{\;\bot\!\!\!\!\!\!\bot\;}}

\newcommand{\mtrxt}[1]{{#1}^\top}
\newcommand{\mtrx}[4]{\left[\begin{matrix}#1 & #2 \\ #3 & #4\end{matrix}\right]}
\DeclarePairedDelimiter\vnorm{\lVert}{\rVert}
\DeclarePairedDelimiterX{\innerprod}[2]{\langle}{\rangle}{#1, #2}

\newcommand{\twopartdef}[4]
{
	\left\{
		\begin{array}{ll}
			#1 & \mbox{if } #2 \\
			#3 & \mbox{if } #4
		\end{array}
	\right.
}

\newcommand{\tabincell}[2]{\begin{tabular}{@{}#1@{}}#2\end{tabular}}

\DeclarePairedDelimiter\ceil{\lceil}{\rceil}
\DeclarePairedDelimiter\floor{\lfloor}{\rfloor}

\newcommand{\ip}[2]{\left\langle #1, #2 \right \rangle}

\def\trmean{{\tt Trmean}}
\def\aggr{{\tt Aggr}}

\spnewtheorem{assumption}[note]{Assumption}{\bfseries}{\itshape}

\begin{abstract}
We consider distributed on-device learning with limited communication and security requirements. We propose a new robust distributed optimization algorithm with efficient communication and attack tolerance. The proposed algorithm has provable convergence and robustness under non-IID settings. Empirical results show that the proposed algorithm stabilizes the convergence and tolerates data poisoning on a small number of workers.
\keywords{Distributed  \and SGD.}
\end{abstract}

\section{Introduction}

Edge devices/IoT such as smart phones, wearable devices, sensors, and smart homes are
increasingly generating massive, diverse, and private data.
In response, there is a trend towards moving computation, including the training of machine-learning models, from cloud/datacenters to edge devices~\cite{anguita2013public,pantelopoulos2010survey}. Ideally, since trained on massive representative data, the resulting models exhibit improved generalization.
In this paper, we consider distributed on-device machine learning. The distributed system is a server-worker architecture. The workers are placed on edge devices, which train the models on the private data. The servers are placed on the cloud/datacenters which maintain a shared global model. Distributed settings require addressing some novel engineering challenges, including the following:
\setitemize[0]{leftmargin=*}
\begin{itemize}
\item \textbf{Limited, heterogeneous computation.} Edge devices, including smart phones, wearable devices, sensors, or vehicles typically have weaker computational ability, compared to the workstations or datacenters used in typical distributed machine learning. Thus, simpler models and stochastic training are usually applied in practice. Furthermore, different devices have different computation capabilities. 
\item \textbf{Limited communication} The connection to the central servers are not guaranteed. Communication can be frequently unavailable, slow, or expensive (in money or in the power of battery). Thus, frequent high-speed communication is typically unaffordable.
\item \textbf{Decentralized, non-IID training data.} Privacy needs and legal requirements (e.g., US HIPAA laws~\cite{act1996health} in a smart hospital, or Europe's GDPR law~\cite{GDPR}) may necessitate that training be performed on-premises using IoT devices and edge machines, and that data and models must not be deposited in the cloud or cloudlets. In more general cases, the users simply dislike sharing their on-device data which potentially reveals private information.
As a result, the data distribution on different devices are neither mixed nor IID i.e. unlike standard settings, device data are non-identically distributed samples from the population. 
This is particularly true when each device is controlled by a specific user whose behavior is supposed to be unique. Furthermore, the sampled data on nearby devices are potentially non-independent, since such devices can be shared by the same user or family. For example, the data of a step counter from a wearable fitness tracker and a smart phone owned by the same user can have different distributions of motion data with mutual dependency. Imagine that the fitness tracker is only used when the user is running, and the smart phone is only used when the user is walking, which results in different distributions. On the other hand, the complementation yields dependency. 
\item \textbf{Untrusted workers and data poisoning.} The servers have limited control over the users' behavior. To protect the privacy, the users are in general anonymous to the servers. Although it is possible to verity the identity of workers/devices~\cite{attestation}, nefarious users can feed poisoned data with abnormal behaviors without backdooring OS. As a result, some workers may push models learned on poisoned data to the servers. 
\end{itemize}

To overcome the challenges above, we introduce \underline{S}ecure \underline{L}ocal \underline{S}tochastic \underline{G}radient \underline{D}escent~(SLSGD), which reduces the communication overhead with local updates, and secures the global model against nefarious users and poisoned data. We summarize the key properties of SLSGD below:
\setitemize[0]{leftmargin=*}
\begin{itemize}
\item \textbf{Local SGD.} SGD is widely used for training models with lower computation overhead. To reduce communication overhead, we use SGD with local updates. The workers do not synchronize with the server after each local gradient descent step. After several local iterations, the workers push the updated model to the servers, which is different from the traditional distributed synchronous SGD where gradients are pushed in each local gradient descent step. To further reduce the communication overhead, the training tasks are activated on a random subset of workers in each global epoch.
\item \textbf{Secure aggregation.} In each global epoch, the servers send the latest global model to the activated workers, and aggregate the updated local models. In such procedure, there are two types of threats: i) poisoned models pushed from comprised devices, occupied or hacked by nefarious users; ii) accumulative error, variance, or models over-fitted on the local dataset, caused by infrequent synchronization of local SGD. To secure the global model against these two threats, we use robust aggregation which tolerates abnormal models, and moving average which mitigates the errors caused by infrequent synchronization. 
\end{itemize}


To our knowledge, there is limited work on local SGD with theoretical guarantees~\cite{yu2018parallel,stich2018local}. The existing convergence guarantees are based on the strong assumption of IID training data or homogeneous local iterations, which we have argued is inappropriate for distributed learning on edge devices. 

We propose SLSGD, which is a variant of local SGD with provable convergence under non-IID and heterogeneous settings, and tolerance to nefarious users. 
In summary, the main contributions are listed as follows:
\setitemize[0]{leftmargin=*}
\begin{itemize} 
\item We show that SLSGD theoretically converges to global optimums for strongly convex functions, non-strongly convex functions, and a restricted family of non-convex functions, under non-IID settings. Furthermore, more local iterations accelerate the convergence. 
\item We show that SLSGD tolerates a small number of workers training on poisoned data. As far as we know, this paper is the first to investigate the robustness of local SGD. 
\item We show empirically that the proposed algorithm stabilizes the convergence, and protects the global model from data poisoning.
\end{itemize}

\section{Related Work}

Our algorithm is based on local SGD introduced in \cite{yu2018parallel,stich2018local}. The major differences are:
\begin{enumerate}
\item We assume non-IID training data and heterogeneous local iterations among the workers. In previous work, local SGD and its convergence analysis required IID training data, or same number of local iterations within each global epoch~(or both). However, these assumptions are unreasonable for edge computing, due to privacy preservation and heterogeneous computation capability.
\item Instead of using the averaged model to overwrite the current global model on the server, we take robust aggregation, and use a moving average to update the current model. These techniques not only secure the global model against data poisoning, but also mitigate the error caused by infrequent synchronization of local SGD.
\end{enumerate}

The limited communication power of edge devices also motivates federated learning~\cite{konevcny2015federated,konevcny2016federated,mcmahan2016communication}, whose algorithm is similar to local SGD, and scenario is similar to our non-IID and heterogeneous settings. Unfortuntaely, federated learning lacks provable convergence guarantees. Furthermore, the issues of data poisoning have not been addressed in previous work. To the best of our knowledge, our proposed work is the first that considers both convergence and robustness, theoretically and practically, on non-IID training data.

Similar to the traditional distributed machine learning, we use the server-worker architecture, which is similar to the Parameter Server~(PS) architecture. Stochastic Gradient Descent~(SGD) with PS architecture, 
is widely used in typical distributed machine learning~\cite{li2014scaling,ho2013more,li2014communication}. Compared to the traditional distributed learning on PS , SLSGD has much less synchronization. Furthermore, in SLSGD, the workers push trained models instead of gradients to the servers.
 
Approaches based on robust statistics are often used to address security issues in the PS architecture~\cite{yin2018byzantine,xie2018phocas}. This enables procedures which tolerate multiple types of attacks and system failures. However, the existing methods and theoretical analysis do not consider local training on non-IID data. So far, the convergence guarantees are based on robust gradient aggregation. In this paper, we provide convergence guarantees for robust model aggregation. Note that gradients and models~(parameters) have different properties. For example, the gradients converge to $0$ for unconstrained problems, while the models do not have such property. On the other hand, recent work has considered attacks targeting federated learning~\cite{bagdasaryan2018backdoor,fung2018mitigating,bhagoji2018analyzing}, but do not propose defense techniques with provable convergence. 

There is growing literature on the practical applications of edge and fog computing~\cite{garcia2015edge,hong2013mobile} in various scenarios such as smart home or sensor networks. More and more big-data applications are moving from the cloud to the edge, including for machine-learning tasks~\cite{cao2015distributed,mahdavinejad2018machine,zeydan2016big}. Although computational power is growing, edge devices are still much weaker than the workstations and datacenters used in typical distributed machine learning e.g. due to the limited computation and communication capacity, and limited power of batteries. To this end, there are machine-learning frameworks with simple architectures such as MobileNet~\cite{howard2017mobilenets} which are designed for learning with weak devices.

\section{Problem Formulation}

Consider distributed learning with $n$ devices. On each device, there is a worker process that trains the model on local data. The overall goal is to train a global model $x \in \R^d$ using data from all the devices.

To do so, we consider the following optimization problem:
\begin{align*}
\min_{x \in \R^d} F(x), 
\end{align*}
where $F(x) = \frac{1}{n} \sum_{i \in [n]} \E_{z^i \sim \mathcal{D}^i} f(x; z^i)$, for $\forall i \in [n]$, $z^i$ is sampled from the local data $\mathcal{D}^i$ on the $i$th device.

\begin{table}[htb]
\caption{Notations and Terminologies}
\label{tbl:notations}
\begin{center}
\begin{small}
\begin{tabular}{|l|l|}
\hline 
Notation/Term  & Description \\ \hline
$n$    & Number of devices \\ \hline
$k$    & Number of simutaneously updating devices  \\ \hline
$T$    & Number of communication epochs  \\ \hline
$[n]$    & Set of integers $\{1, \ldots, n \}$  \\ \hline
$S_t$    & Randomly selected devices in the $t^{\mbox{th}}$ epoch  \\ \hline
$b$    & Parameter of trimmed mean  \\ \hline
$H_{min}$    & Minimal number of local iterations  \\ \hline
$H^i_t$    & Number of local iterations in the $t^{\mbox{th}}$ epoch \\ & on the $i$th device \\ \hline
$x_t$    & Initial model in the $t^{\mbox{th}}$ epoch  \\ \hline
$x^i_{t,h}$    & Model updated in the $t^{\mbox{th}}$ epoch,  $h$th local iteration, on the $i$th device  \\ \hline
$\mathcal{D}^i$    & Dataset on the $i$th device  \\ \hline
$z^i_{t, h}$    & Data~(minibatch) sampled in the $t^{\mbox{th}}$ epoch, \\ & $h$th local iteration, on the $i$th device  \\ \hline
$\gamma$    & Learning rate  \\ \hline
$\alpha$    & Weight of moving average  \\ \hline
$\| \cdot \|$    & All the norms in this paper are $l_2$-norms  \\ \hline
Device    & Where the training data are placed  \\ \hline
Worker    & One worker on each device,  process that trains the model  \\ \hline
User    & Agent that produces data on the devices, and/or controls the devices  \\ \hline
Nefarious user    & Special user that produces poisoned data or has abnormal behaviors  \\ \hline
\end{tabular}
\end{small}
\end{center}
\end{table}

\subsection{Non-IID Local Datasets}

Note that different devices have different local datasets, i.e., $\mathcal{D}^i \neq \mathcal{D}^j, \forall i \neq j$. Thus, samples drawn from different devices have different expectations, which means that $\E_{z^i \sim \mathcal{D}^i} f(x; z^i) \neq \E_{z^j \sim \mathcal{D}^j} f(x; z^j), \forall i \neq j$. Further, since different devices can be possessed by the same user or the same group of users~(e.g., families), samples drawn from different devices can be potentially dependent on each other.

\subsection{Data Poisoning}

The users are anonymous to the servers. Furthermore, it is impossible for the servers to verify the benignity of the on-device training data. Thus, the servers can not trust the edge devices. A small number of devices may be susceptible to data poisoned by abnormal user behaviors or in the worst case, are controlled by users or agents who intend to directly upload harmful  models to the servers. 

In this paper, we consider a generalized threat model, where the workers can push arbitrarily bad models to the servers. The bad models can cause divergence of training. Beyond more benign issues such as hardware, software or communication failures, there are multiple ways for nefarious users to manipulate the uploaded models e.g. data poisoning~\cite{bae2018security}. In worst case, nefarious users can even directly hack the devices and replace the correct models with arbitrary values. We provide a more formal definition of the threat model in Section~\ref{subsect:threat}.

\section{Methodology}

In this paper, we propose SLSGD: SGD with communication efficient local updates and secure model aggregation. A single execution of SLSGD is composed of $T$ communication epochs. At the beginning of each epoch, a randomly selected group of devices $S_t$ pull the latest global model from the central server. Then, the same group of devices locally update the model without communication with the central server. At the end of each epoch, the central server aggregates the updated models and then updates the global model.

In the $t^{\mbox{th}}$ epoch, on the $i$th device, we locally solve the following optimization problem using SGD for $H^{i}_t$ iterations:
\begin{align*}
\min_{x \in \R^d} \E_{z^i \sim \mathcal{D}^i} f(x; z^i).
\end{align*}
Then, the server collects the resulting local models $x^i_{t, H^i_t}$, and aggregates them using $\aggr\left( \{x^i_{t, H^i_t}: i \in S_t\} \right)$. Finally, we update the model with a moving average over the current model and the aggregated local models.

The detailed algorithm is shown in Algorithm~\ref{alg:robust_fed}. $x_{t, h}^i$ is the model parameter updated in $h$th local iteration of the $t^{\mbox{th}}$ epoch, on the $i$th device. $z_{t, h}^i$ is the data randomly drawn in $h$th local iteration of the $t^{\mbox{th}}$ epoch, on the $i$th device. $H_t^i$ is the number of local iterations in the $t^{\mbox{th}}$ epoch, on the $i$th device. $\gamma$ is the learning rate and $T$ is the total number of epochs. Note that if we take Option I~(or Option II with $b=0$) with $\alpha=1$, the algorithm is the same as the federated learning algorithm \textit{FedAvg}~\cite{mcmahan2016communication}. Furthermore, if we take homogeneous local iterations $H^i_t = H, \forall i$, Option I with $\alpha=1$ is the same as local SGD~\cite{stich2018local}. Thus, FedAvg and local SGD are both special cases of SLSGD.

\begin{algorithm}[hbt]
\caption{SLSGD}
\begin{algorithmic}[1]
\STATE Input: $k \in [n]$, $b$
\STATE Initialize $x_0$
\FORALL{epoch $t \in [T]$}
	\STATE Randomly select a group of $k$ workers, denoted as $S_t \subseteq [n]$
	\FORALL{$i \in S_t$ in parallel}
		\STATE Receive the latest global model $x_{t-1}$ from the server
		\STATE $x_{t, 0}^i \leftarrow x_{t-1}$
		\FORALL{local iteration $h \in [H_t^i]$}
			\STATE Randomly sample $z_{t, h}^i$
			\STATE $x_{t, h}^i \leftarrow x_{t, h-1}^i - \gamma \nabla f(x_{t, h-1}^i; z_{t, h}^i)$
		\ENDFOR
		\STATE Push $x_{t, H_t^i}^i$ to the server 
	\ENDFOR
	\STATE Aggregate:
	$
	x'_{t} \leftarrow
	\begin{cases}
		 \mbox{Option I: } & \frac{1}{k} \sum_{i \in S_t} x_{t, H_t^i}^i \\
		 \mbox{Option II: } & \trmean_b \left( \left\{ x_{t, H_t^i}^i: i \in S_t \right\} \right)
	\end{cases}
	$
	\STATE Update the global model: $x_{t} \leftarrow (1-\alpha) x_{t-1} + \alpha x'_{t}$
\ENDFOR
\end{algorithmic}
\label{alg:robust_fed}
\end{algorithm} 

\subsection{Threat Model and Defense Technique}
\label{subsect:threat}

First, we formally define the threat model.

\begin{definition}(Threat Model)
\label{def:threat}
In Line~12 of Algorithm~\ref{alg:robust_fed}, instead of the correct $x_{t, H_t^i}^i$, a worker, training on poisoned data or controlled by an abnormal/nefarious user,
may push arbitrary values to the server.
\end{definition}

\begin{remark}
Note that the users/workers are anonymous to the servers, and the nefarious users can sometimes pretend to be well-behaved to fool the servers. Hence, it is impossible to surely identify the workers training on poisoned data, according to their historical behavior. 
\end{remark}

In Algorithm~\ref{alg:robust_fed}, Option II uses the trimmed mean as a robust aggregation which tolerates the proposed threat model.
To define the trimmed mean, we first define the order statistics.
\begin{definition}(Order Statistics)
\label{def:ord_stat}
By sorting the scalar sequence $\{u_i: i \in [k], u_i \in \R\}$, we get $u_{1:k} \leq u_{2:k} \leq \ldots \leq u_{k:k}$, where $u_{i:k}$ is the $i$th smallest element in $\{u_i: i \in [k]\}$.
\end{definition}
Then, we define the trimmed mean.
\begin{definition}(Trimmed Mean)
\label{def:trim}
For $b \in \{0, 1, \ldots, \lceil k/2 \rceil - 1 \}$, the $b$-trimmed mean of the set of scalars $\{u_i: i \in [k]\}$ is defined as follows:
\[
\trmean_b(\{u_i: i \in [k]\}) = \frac{1}{k-2b} \sum_{i=b+1}^{k-b} u_{i:k},
\] where  $u_{i:k}$ is the $i$th smallest element in $\{u_i: i \in [i]\}$ defined in Definition~\ref{def:ord_stat}. The high-dimensional version~($u_i \in \R^d$) of $\trmean_b(\cdot)$ simply applies the trimmed mean in a coordinate-wise manner.
\end{definition}

Note that the trimmed mean (Option II) is equivalent to the standard mean (Option I) if we take $b=0$.

\begin{remark}
Algorithm~\ref{alg:robust_fed} provides two levels of defense: robust aggregation~(Line 14) and moving average~(Line 15). The robust aggregation tries to filter out the models trained on poisoned data. The moving average mitigates not only the extra variance/error caused by robust aggregation and data poisoning, but also the accumulative error caused by infrequent synchronization of local updates. 
\end{remark}

\begin{remark}
We can also replace the coordinate-wise trimmed mean with other robust statistics such as geometric median~\cite{chen2019distributed}. We choose coordinate-wise median/trimmed mean in this paper because unlike geometric median, trimmed mean has a computationally efficient closed-form solution.
\end{remark}

\section{Convergence Analysis}
In this section, we prove the convergence of Algorithm~\ref{alg:robust_fed} with non-IID data, for a restricted family of non-convex functions. Furthermore, we show that the proposed algorithm tolerates the threat model introduced in Definition~\ref{def:threat}. We start with the assumptions required by the convergence guarantees. 

\subsection{Assumptions} 
For convenience, we denote
$
F^i(x) = \E_{z^i \sim \mathcal{D}^i} f(x; z^i).
$
\begin{assumption} (Existence of Global Optimum)
\label{asm:loss}
We assume that there exists at least one (potentially non-unique) global minimum of the loss function $F(x)$, denoted by $x^*$.
\end{assumption}
\begin{assumption} (Bounded Taylor's Approximation)
\label{asm:smooth_cvx}
We assume that for $\forall x, z$, $f(x; z)$ has $L$-smoothness and $\mu$-lower-bounded Taylor's approximation:
\begin{align*}
&\ip{\nabla f(x; z)}{y-x} + \frac{\mu}{2} \|y-x\|^2 \leq f(y;z) - f(x;z) \\
&\leq \ip{\nabla f(x; z)}{y-x} + \frac{L}{2} \|y-x\|^2,
\end{align*}
where $\mu \leq L$, and $L > 0$.
\end{assumption}
Note that Assumption~\ref{asm:smooth_cvx} covers the case of non-convexity by taking $\mu < 0$, non-strong convexity by taking $\mu = 0$, and strong convexity by taking $\mu > 0$. 
\begin{assumption} (Bounded Gradient)
\label{asm:variance}
We assume that for $\forall x \in \R^d, i \in [n]$, and $\forall z \sim \mathcal{D}^i$, we have $\| \nabla f(x; z) \|^2 \leq V_1$.
\end{assumption}

Based on the assumptions above, we have the following convergence guarantees. All the detailed proofs can be found in the appendix. 

\subsection{Convergence without Data Poisoning}


First, we analyze the convergence of Algorithm~\ref{alg:robust_fed} with Option I, where there are no poisoned workers.
\begin{theorem}
\label{thm:convergence}
We take $\gamma \leq \min \left(\frac{1}{L}, 2\right)$. After $T$ epochs, Algorithm~\ref{alg:robust_fed} with Option I converges to a global optimum:
\begin{align*}
    &\E \left[ F(x_T) - F(x_*) \right] 
    \leq \left( 1-\alpha + \alpha (1 - \frac{\gamma}{2})^{H_{min}} \right)^T \left[ F(x_0) - F(x_*) \right] \\
    &\quad + \left[ 1 -  \left( 1-\alpha + \alpha (1 - \frac{\gamma}{2})^{H_{min}} \right)^T \right] \mathcal{O}\left( V_1 + \left( 1 + \frac{1}{k} - \frac{1}{n} \right) V_2 \right),
\end{align*}
where $V_2 = \max_{t \in \{0, T-1\}, h \in \{0, H^i_t - 1\}, i \in [n]} \|x_{t, h}^i - x_*\|^2$.
\end{theorem}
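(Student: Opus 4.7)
My plan is to derive a one-epoch recursion of the form $e_t \leq A \cdot e_{t-1} + (1-A)\, C$, where $e_t := \E[F(x_t) - F(x_*)]$, $A := 1 - \alpha + \alpha (1-\gamma/2)^{H_{min}}$, and $C := \mathcal{O}(V_1 + (1 + 1/k - 1/n) V_2)$, and then unroll the recursion $T$ times via the geometric series identity
\begin{align*}
e_T \leq A^T e_0 + (1-A)\, C \sum_{s=0}^{T-1} A^s = A^T e_0 + (1-A^T)\, C,
\end{align*}
which is exactly the claimed bound. Three ingredients build the one-epoch recursion.

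First, I would handle the moving-average step. Since $x_t = (1-\alpha) x_{t-1} + \alpha x'_t$ is a convex combination, the upper bound in Assumption~\ref{asm:smooth_cvx} applied at $x_*$ (where $\nabla F(x_*) = 0$), together with convexity of $\|\cdot - x_*\|^2$, would yield
\begin{align*}
\E[F(x_t) - F(x_*)] \leq (1-\alpha)\,\E[F(x_{t-1}) - F(x_*)] + \alpha\,\E[F(x'_t) - F(x_*)],
\end{align*}
up to quadratic cross terms that are absorbed into the noise floor. Second, I would bound the per-worker progress: for each activated worker $i$ starting from $x^i_{t,0} = x_{t-1}$ and running $H^i_t \geq H_{min}$ local SGD steps on $F^i$, iteratively applying the smoothness upper bound to the update $x^i_{t,h+1} = x^i_{t,h} - \gamma \nabla f(x^i_{t,h}; z^i_{t,h})$ and combining it with the lower-Taylor bound (which furnishes a PL-type inequality for the expected loss near $x_*$) would give a per-step contraction factor of $(1 - \gamma/2)$ in expected function gap plus a noise floor of $\mathcal{O}(\gamma V_1)$, and hence after iterating $H^i_t$ steps
\begin{align*}
\E[F(x^i_{t,H^i_t}) - F(x_*)] \leq (1-\gamma/2)^{H^i_t}\,[F(x_{t-1}) - F(x_*)] + \mathcal{O}(V_1).
\end{align*}
Replacing $H^i_t$ by its lower bound $H_{min}$ then produces a uniform contraction factor $(1-\gamma/2)^{H_{min}}$ across workers.

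Third, I would account for the random subset $S_t$. Since $x'_t = \frac{1}{k} \sum_{i \in S_t} x^i_{t, H^i_t}$ is a uniform sample without replacement of size $k$ from $n$ trajectories, the classical sampling identity $\mathrm{Var}_{S_t}[\bar x_{S_t}] = \tfrac{n-k}{k(n-1)}\,\sigma^2$, bounded above by $(1/k - 1/n)$ times the worst-case per-device deviation, generates the $(1 + 1/k - 1/n)\,V_2$ contribution to $C$ once $V_2 = \max \|x^i_{t,h} - x_*\|^2$ is substituted in. Chaining ingredients one through three delivers the single-epoch recursion, and the geometric-series unrolling closes the argument.

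The main obstacle is the per-worker analysis. Unlike standard SGD, $x_*$ is not the minimizer of the local loss $F^i$, so neither $F^i(x) - F^i(x_*)$ nor $\|x - x_*\|^2$ contracts monotonically along local SGD on worker $i$. The fix is to measure progress in terms of the global $F$ (not $F^i$) and absorb the drift $\nabla F^i - \nabla F$ into the noise floor using the bounded-gradient constant $V_1$ and the bounded-iterate constant $V_2$. Careful bookkeeping of these additive errors across $H^i_t$ local steps, together with the stepsize restriction $\gamma \leq \min(1/L, 2)$ that keeps $1 - \gamma/2 \in [0,1]$ and makes the smoothness descent cleanly dominate the stochastic term, is what produces the correct $(1-\gamma/2)^{H_{min}}$ contraction with the stated $\mathcal{O}$-noise.
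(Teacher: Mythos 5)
Your proposal follows essentially the same route as the paper's proof: a per-step $(1-\gamma/2)$ contraction of the global objective gap obtained from smoothness plus a PL-type inequality (with the non-IID drift absorbed into the $V_1, V_2$ noise floor), telescoping over $H^i_t \geq H_{min}$ local steps, the without-replacement sampling variance producing the $\left(\frac{1}{k}-\frac{1}{n}\right)V_2$ term, the moving-average step producing the factor $1-\alpha+\alpha(1-\gamma/2)^{H_{min}}$, and geometric unrolling over $T$ epochs. The only real difference is how the non-convex case $\mu \le 0$ is dressed: where you ``absorb quadratic cross terms into the noise floor,'' the paper makes this precise via the regularized surrogates $G_*(x)=F(x)+\frac{\rho}{2}\|x-x_*\|^2$ and $G_{t-1}(x)=F(x)+\frac{\rho}{2}\|x-x_{t-1}\|^2$, whose strong convexity/convexity supplies both the PL-type bound and the Jensen step for the moving average at the cost of extra $\mathcal{O}(V_2)$ terms---the same maneuver at the same level of rigor.
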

\begin{remark}
When $\alpha \rightarrow 1$, $\left( 1-\alpha + \alpha (1 - \frac{\gamma}{2})^{H_{min}} \right)^T \rightarrow (1 - \frac{\gamma}{2})^{T H_{min}}$, which results in nearly linear convergence to the global optimum, with error $\mathcal{O}(V_1 + V_2)$. When $\alpha \rightarrow 0$, the error is nearly reduced $0$, but the convergence will slow down. We can tune $\alpha$ to trade-off between the convergence rate and the error. In practice, we can take diminishing $\alpha$: $\alpha_t \propto \frac{1}{t^2}$, where $\alpha_t$ is the $\alpha$ in the $t^{\mbox{th}}$ global epoch. Furthermore, taking $\alpha_T = \frac{1}{T^2}$, $\lim_{T \rightarrow +\infty} \left[ 1 -  \left( 1-\alpha_T + \alpha_T (1 - \frac{\gamma}{2})^{H_{min}} \right)^T \right] = 0$.
\end{remark}


\subsection{Convergence with Data Poisoning}

Under the threat model defined in Definition~\ref{def:threat}, in worst case, Algorithm~\ref{alg:robust_fed} with Option I and $\alpha=1$~(local SGD) suffers from unbounded error.
\begin{proposition} (Informal)
Algorithm~\ref{alg:robust_fed} with Option I and $\alpha = 1$ can not tolerate the threat model defined in Definition~\ref{def:threat}.
\end{proposition}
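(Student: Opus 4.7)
The plan is to exhibit an adversarial strategy under which Algorithm~\ref{alg:robust_fed} with Option~I and $\alpha = 1$ produces an iterate whose loss gap $\E[F(x_T) - F(x_*)]$ can be made arbitrarily large, contradicting any uniform bound of the form appearing in Theorem~\ref{thm:convergence}.

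First I would unfold the update on Lines~14--15 in this special case: with $b = 0$ and $\alpha = 1$, the server simply sets
$$ x_t = \frac{1}{k} \sum_{i \in S_t} x_{t, H_t^i}^i. $$
Under Definition~\ref{def:threat}, a single nefarious worker $j \in S_t$ may replace its honest output by an arbitrary $\tilde u \in \R^d$, so the server actually computes
$$ x_t = \frac{\tilde u}{k} + \frac{1}{k} \sum_{i \in S_t \setminus \{j\}} x_{t, H_t^i}^i. $$
The honest contributions are bounded (each is $x_{t-1}$ plus at most $H_t^i$ SGD steps whose norms are controlled by Assumption~\ref{asm:variance}), so sending $\|\tilde u\| \to \infty$ along any fixed direction forces $\|x_t\| \to \infty$.

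Next I would convert the blow-up of $\|x_t\|$ into a blow-up of the loss gap. In the strongly convex regime $\mu > 0$ of Assumption~\ref{asm:smooth_cvx} this is automatic, since that assumption implies $F(x) - F(x_*) \ge \frac{\mu}{2}\|x - x_*\|^2$. Because the proposition only needs to fail in the worst case, it also suffices to exhibit a single concrete instance, for example $f(x; z) = \frac{1}{2}\|x - z\|^2$ on every device, for which $F$ is coercive and $F(x_T) - F(x_*) \to \infty$ as $\|\tilde u\| \to \infty$. Either route yields $\sup \E[F(x_T) - F(x_*)] = +\infty$, which no bound depending only on $V_1, V_2, \alpha, \gamma, H_{min}, T, k, n$ can reproduce.

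I do not expect a substantive obstacle: the argument is essentially the observation that the sample mean has a breakdown point of $0$, so an unfiltered weight of $1/k$ on the attacker's output is enough to dominate the update in a single epoch. The only modelling subtlety is that Assumption~\ref{asm:smooth_cvx} alone permits $\mu \le 0$ and hence losses that are bounded above; I would sidestep this by reading the informal claim as asserting failure on at least one instance consistent with Assumptions~\ref{asm:loss}--\ref{asm:variance}, which the quadratic example above provides. The same construction also shows that $x_T$ itself can be pushed arbitrarily far from $x_*$, which is typically the more operationally meaningful failure mode.
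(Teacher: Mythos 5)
Your argument is correct and rests on the same underlying fact as the paper's sketch -- the unweighted mean in Option~I has breakdown point zero, so a worker controlling a $1/k$ fraction of the aggregate can hijack the update -- but the concrete attack is different. The paper uses a \emph{cancellation} attack: the $q_1$ poisoned workers push $-\frac{k-q_1}{q_1}x^i_{t,H^i_t}+c$, so that (under an IID-for-convenience assumption) the \emph{expected} aggregate equals exactly $\frac{q_1}{k}c$, i.e.\ the global model can be steered to any prescribed target in expectation; the "unbounded error" conclusion is then asserted directly from the arbitrariness of the model, without constructing a loss. You instead use a single-worker magnitude attack ($\|\tilde u\|\to\infty$) that works per realization, requires no IID assumption and no knowledge of the honest workers' outputs, and you go one step further than the paper by converting the blow-up of $\|x_t\|$ into a blow-up of $F(x_T)-F(x_*)$ via a coercive instance. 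Your route is more elementary and arguably more faithful to the threat model (the adversary need not know or cancel the honest contributions); the paper's route shows the slightly stronger manipulation statement that the attacker can place the expected model at an arbitrary point, not merely far away.

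One small caveat: your quadratic witness $f(x;z)=\frac{1}{2}\|x-z\|^2$ (and indeed any coercive or $\mu>0$ instance) violates the letter of Assumption~\ref{asm:variance}, which demands $\|\nabla f(x;z)\|^2\le V_1$ for \emph{all} $x\in\R^d$, so the phrase "consistent with Assumptions~\ref{asm:loss}--\ref{asm:variance}" is not literally accurate. This is a tension inherited from the paper's own assumption set (global gradient boundedness is incompatible with strong convexity on $\R^d$), and since the proposition is informal and the paper's sketch never constructs a loss at all, it does not undermine your argument; it is enough to state, as you do in your closing remark, that the iterate itself is driven arbitrarily far from $x_*$, or to restrict the quadratic to a large bounded region on which the trajectory of honest workers stays while $\tilde u$ is chosen to exit it.
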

\begin{proof} (Sketch)
Without loss of generality, assume that in a specific epoch $t$, among all the $k$ workers, the last $q_1$ of them are poisoned. For the poisoned workers, instead of pushing the correct value $x_{t, H_t^i}^i$ to the server, they push $- \frac{k-q_1}{q_1} x_{t, H_t^i}^i + c$, where $c$ is an arbitrary constant. For convenience, we assume IID~(required by local SGD, but not our algorithm) local datasets for all the workers. Thus, the expectation of the aggregated global model becomes $\frac{1}{k} \left\{ (k-q_1) \E\left[ x_{t, H_t^i}^i \right] + q_1 \E\left[ - \frac{k-q_1}{q_1} x_{t, H_t^i}^i + c \right] \right\} = \frac{q_1}{k} c$, which means that in expectation, the aggregated global model can be manipulated to take arbitrary values, which results in unbounded error.
\end{proof}
In the following theorems, we show that using Algorithm~\ref{alg:robust_fed} with Option II, the error can be upper bounded. 
\begin{theorem}
Assume that additional to the $n$ normal workers, there are $q$ workers training on poisoned data, where $q \ll n$, and $2q \leq 2b < k$. 
We take $\gamma \leq \min \left(\frac{1}{L}, 2\right)$. After $T$ epochs, Algorithm~\ref{alg:robust_fed} with Option II converges to a global optimum:
\begin{align*}
    &\E \left[ F(x_T) - F(x_*) \right] 
    \leq \left( 1-\alpha + \alpha (1 - \frac{\gamma}{2})^{H_{min}} \right)^T \left[ F(x_0) - F(x_*) \right] \\
    & \quad + \left[ 1 -  \left( 1-\alpha + \alpha (1 - \frac{\gamma}{2})^{H_{min}} \right)^T \right] \left[ \mathcal{O}(V_1) +  \mathcal{O}(\beta V_2)  \right],
\end{align*}
where $V_2 = \max_{t \in \{0, T-1\}, h \in \{0, H^i_t - 1\}, i \in [n]} \|x_{t, h}^i - x_*\|^2$, $\beta = 1 + \frac{1}{k-q} - \frac{1}{n} + \frac{k (k+b)}{(k-b-q)^2}$.
\end{theorem}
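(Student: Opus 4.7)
The plan is to mirror the structure of Theorem~\ref{thm:convergence}, with the main new work concentrated in controlling the trimmed-mean aggregation step. First I would decompose a single epoch using the moving-average update $x_t = (1-\alpha) x_{t-1} + \alpha x'_t$. Combining this with the $L$-smoothness upper bound in Assumption~\ref{asm:smooth_cvx}, I would derive a one-step inequality of the form
\[
\E[F(x_t) - F(x_*)] \leq (1-\alpha)\,\E[F(x_{t-1}) - F(x_*)] + \alpha\,\E[\Phi(x'_t)],
\]
where $\Phi(x'_t)$ collects terms in $F(x'_t) - F(x_*)$ and $\|x'_t - x_{t-1}\|^2$, exactly as in the clean analysis. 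This isolates the only place where the proof of this theorem differs from Theorem~\ref{thm:convergence}: the quantity $x'_t$ is now a coordinate-wise trimmed mean over a set of local models, up to $q \le b$ of which may be arbitrary.

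The heart of the argument is to bound $x'_t = \trmean_b(\{x^i_{t, H^i_t} : i \in S_t\})$ in terms of the honest submissions alone. Fix a coordinate $j$. After sorting, $b$ entries are discarded at each end; since the adversary supplies at most $q \le b$ values, each surviving coordinate lies between the $(b{+}1)$-th smallest and the $(k{-}b)$-th largest honest coordinate, and at least $k - b - q$ of the $k - 2b$ surviving entries are themselves honest. This lets me bound every surviving coordinate (including any adversarial ones that pass trimming) by a convex combination of honest coordinates, so that $|x'_t(j) - \bar{x}_t(j)|^2$ is controlled by the empirical variance of the honest coordinates with a blow-up factor of order $\frac{k(k+b)}{(k-b-q)^2}$, where $\bar x_t$ denotes the mean over honest workers. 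Summing over coordinates and taking expectation, and using Assumption~\ref{asm:variance} together with $\|x^i_{t,H^i_t} - x_*\|^2 \le V_2$, produces the $\mathcal{O}(\beta V_2)$ term in the final bound, with $\beta = 1 + \frac{1}{k-q} - \frac{1}{n} + \frac{k(k+b)}{(k-b-q)^2}$ (the first three terms coming from the sampling variance of the honest mean $\bar x_t$, exactly as in Theorem~\ref{thm:convergence}, and the last term from the trimming blow-up).

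For each honest worker, I would reuse the per-iteration local-SGD contraction already used in Theorem~\ref{thm:convergence}: using Assumption~\ref{asm:smooth_cvx} and $\gamma \le 1/L$, each local step contracts $F(x) - F(x_*)$ by a factor $(1 - \gamma/2)$ plus an $\mathcal{O}(\gamma^2 V_1)$ stochastic term; iterating this over $H^i_t \ge H_{min}$ local iterations gives the multiplicative factor $(1 - \gamma/2)^{H_{min}}$. Plugging these honest-worker bounds, together with the trimmed-mean bound from the previous paragraph, into the moving-average decomposition yields the per-epoch recursion
\[
\E[F(x_t) - F(x_*)] \leq \bigl[(1-\alpha) + \alpha(1-\tfrac{\gamma}{2})^{H_{min}}\bigr]\,\E[F(x_{t-1}) - F(x_*)] + \alpha\bigl[\mathcal{O}(V_1) + \mathcal{O}(\beta V_2)\bigr].
\]
Unrolling this recursion for $T$ epochs and summing the resulting geometric series gives exactly the claimed inequality.

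The main obstacle is the trimmed-mean step: I must carefully track which honest indices survive after coordinate-wise sorting and convert worst-case adversarial values into a clean bound that scales as $\frac{k(k+b)}{(k-b-q)^2}$ rather than something looser such as $\mathcal{O}(k/(k-2b))$. The per-coordinate convex-combination argument (adapted from the trimmed-mean robustness analysis of Yin et al.) is the key tool; everything else is a straightforward extension of the clean proof, with the honest-worker population of size $k$ replaced by a population of size $k - q$ wherever the sampling variance appears.
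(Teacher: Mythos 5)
Your proposal matches the paper's proof in both structure and substance: the paper likewise reduces the epoch recursion to Theorem~\ref{thm:convergence} with the honest population of size $k-q$ (yielding the $\frac{1}{k-q}-\frac{1}{n}$ sampling term), and its key lemma is exactly your order-statistic sandwiching of the coordinate-wise trimmed mean between averages of honest subsets of size $k-b-q$, giving a deviation from the honest mean bounded by the honest empirical variance times $\frac{2(k-q)(k+b-q)}{(k-b-q)^2}=\mathcal{O}\bigl(\frac{k(k+b)}{(k-b-q)^2}\bigr)$, which $L$-smoothness converts into the $\mathcal{O}(\beta V_2)+\mathcal{O}(V_1)$ error before telescoping. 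Only a cosmetic slip: at least $k-2b-q$ (not $k-b-q$) of the $k-2b$ surviving entries are honest; the quantity $k-b-q$ enters instead as the size of the honest-subset averages in the sandwich bound, so your stated blow-up factor is still the right one.
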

\begin{remark}
\label{rem:convergence_subset_robust}
Note that the additional error caused by the $q$ poisoned workers and $b$-trimmed mean is controlled by the factor $\frac{k(k+b)}{(k-b-q)^2}$, which decreases when $q$ and $b$ decreases, or $k$ increases.
\end{remark}

\section{Experiments}

In this section, we evaluate the proposed algorithm by testing its convergence and robustness. Note that zoomed figures of the empirical results can be found in the appendix.

\subsection{Datasets and Evaluation Metrics}
We conduct experiments on the benchmark CIFAR-10 image classification dataset~\cite{krizhevsky2009learning}, which is composed of 50k images for training and 10k images for testing. Each image is resized and cropped to the shape of $(24,24,3)$. We use a convolutional neural network~(CNN) with 4 convolutional layers followed by 1 fully connected layer. We use a simple network architecture, so that it can be easily handled by edge devices.  The detailed network architecture can be found in our submitted source code (will also be released upon publication).
The experiments are conducted on CPU devices. We implement SLSGD using the MXNET~\cite{chen2015mxnet} framework.

We also conduct experiments of LSTM-based language models on WikiText-2 dataset~\cite{merity2016pointer}. The model architecture was taken from the MXNET and Gluon-NLP tutorial~\cite{gluonnlp}. The results can be found in the appendix.

In each experiment, the training set is partitioned onto $n=100$ devices. We test the preformance of SLSGD on both balanced and unbalanced partitions:
\setitemize[0]{leftmargin=*}
\begin{itemize}
\item \textbf{Balanced Partition.} Each of the $n=100$ partitions has $500$ images.
\item \textbf{Unbalanced Partition.} To make the setting more realistic, we partition the training set into unbalanced sizes. The sizes of the $100$ partitions are $104, 112, \ldots, 896$~(an arithmetic sequence with step $8$, starting with $104$). Furthermore, to enlarge the variance, we make sure that in each partition, there are at most $5$ different labels out of all the $10$ labels. Note that some partitions only have one label.
\end{itemize}

In each epoch, $k=10$ devices are randomly selected to launch local updates, with the minibatch size of $50$. We repeat each experiment 10 times and take the average. We use top-1 accuracy on the testing set, and cross entropy loss function on the training set as the evaluation metrics.

The baseline algorithm is \textit{FedAvg} introduced by \cite{mcmahan2016communication}, which is a special case of our proposed Algorithm~\ref{alg:robust_fed} with Option I and $\alpha=1$. To make the comparison clearer, we refer to \textit{FedAvg} as ``SLSGD, $\alpha=1, b=0$''.

We test SLSGD with different hyperparameters $\gamma$, $\alpha$, and $b$~(definitions can be found in Table~\ref{tbl:notations}).

\begin{figure*}[htb!]
\centering
\subfigure[Top-1 accuracy on testing set]{\includegraphics[width=0.49\textwidth,height=3.8cm]{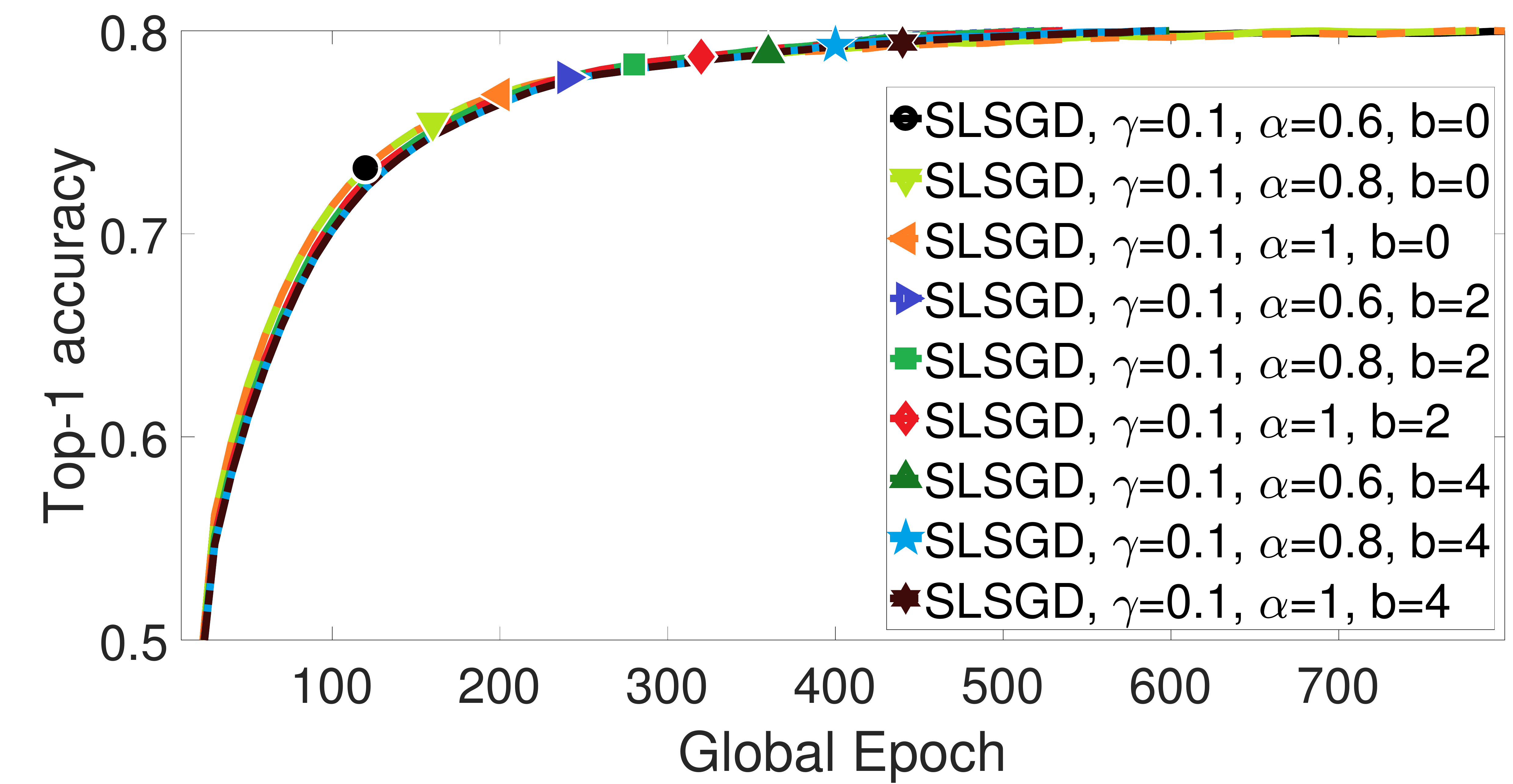}}
\subfigure[Cross entropy on training set]{\includegraphics[width=0.49\textwidth,height=3.8cm]{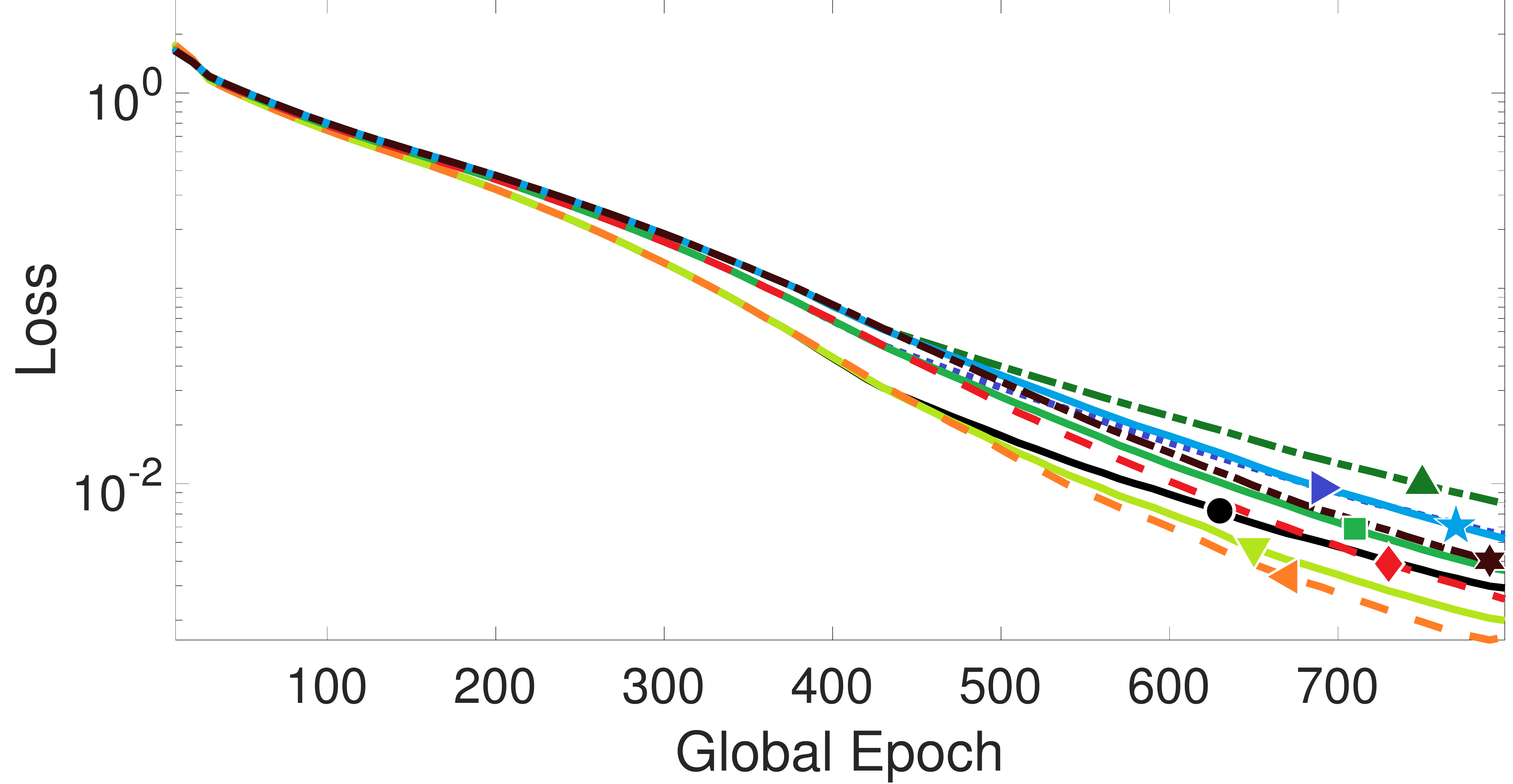}}
\caption{Convergence on training data with balanced partition, without attack. Each epoch is a full pass of the local training data. Legend ``SLSGD, $\gamma=0.1, \alpha=0.8, b=2$'' means that SLSGD takes the learning rate $0.1$ and $\trmean_2$ for aggregation, and the initial $\alpha=1$ decays by the factor of $0.8$ at the $400$th epoch. SLSGD with $\alpha=1$ and $b=0$ is the baseline \textit{FedAvg}. Note that we fix the random seeds. Thus, before $\alpha$ decays at the $400$th epoch, results with the same $\gamma$ and $b$ are the same.}
\label{fig:balanced}
\end{figure*}
\begin{figure*}[htb!]
\centering
\subfigure[Top-1 accuracy on testing set]{\includegraphics[width=0.49\textwidth,height=3.8cm]{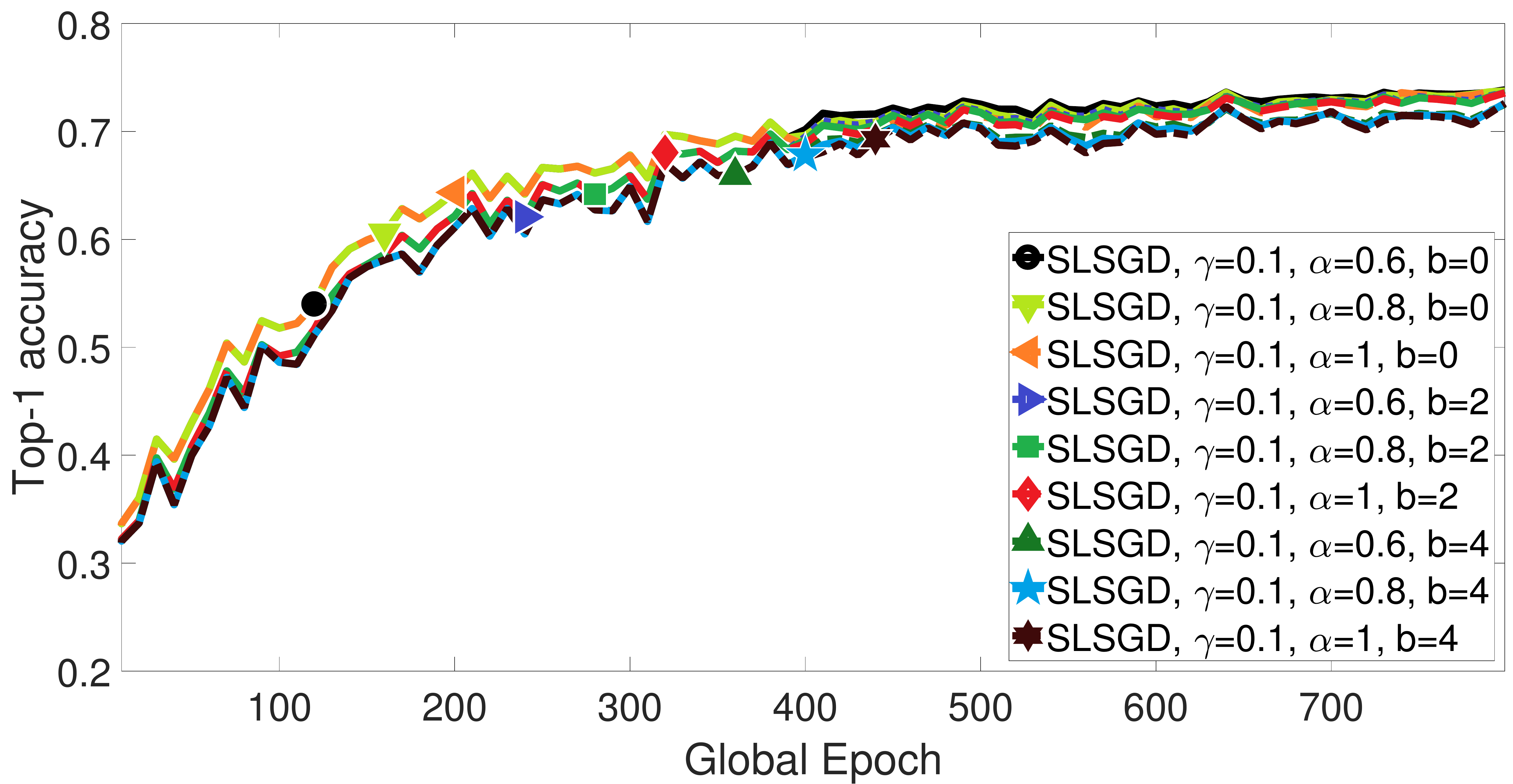}}
\subfigure[Cross entropy on training set]{\includegraphics[width=0.49\textwidth,height=3.8cm]{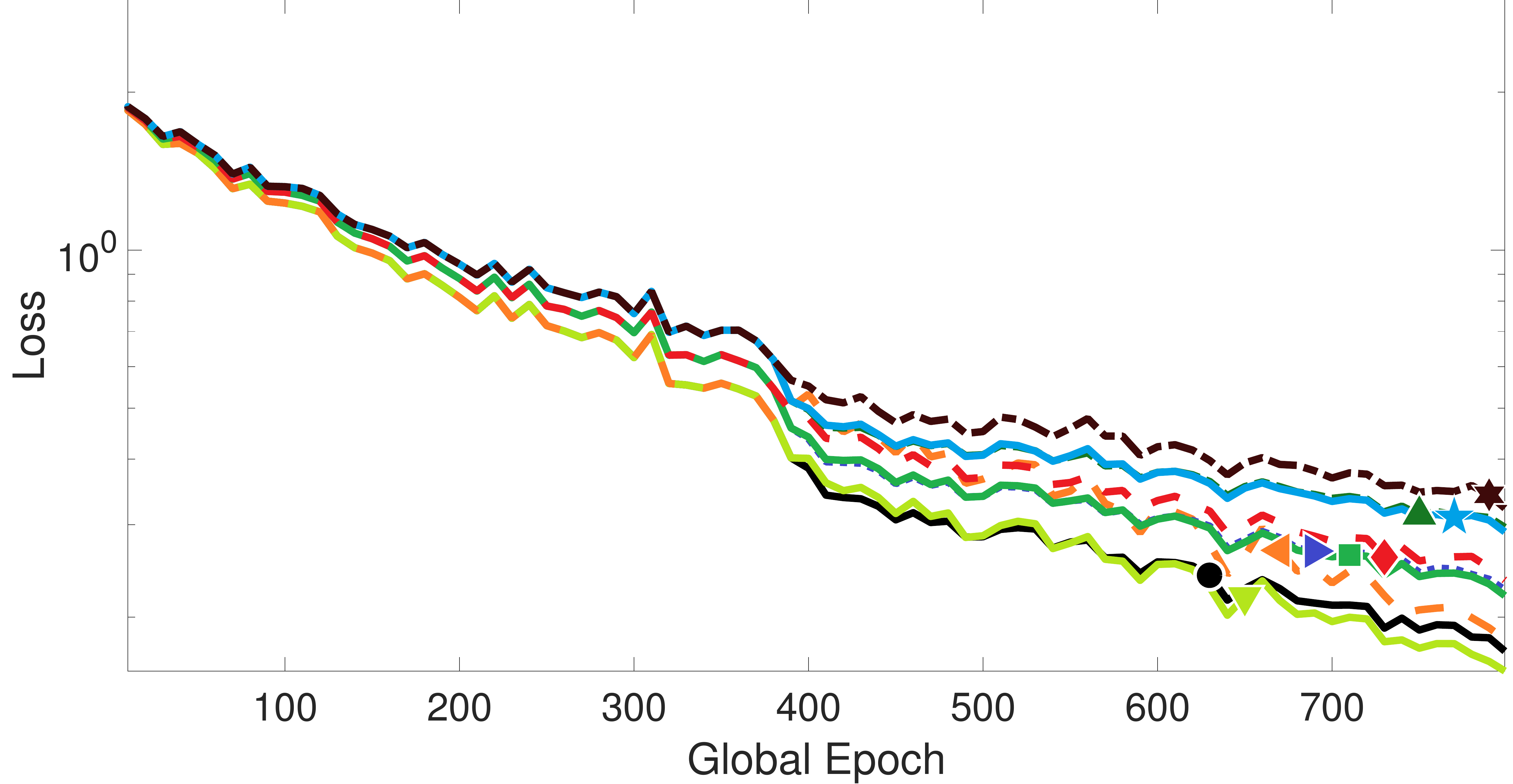}}
\caption{Convergence on training data with unbalanced partition, without attack. Each epoch is a full pass of the local training data. Legend ``SLSGD, $\gamma=0.1, \alpha=0.8, b=2$'' means that SLSGD takes the learning rate $0.1$ and $\trmean_2$ for aggregation, and the initial $\alpha=1$ decays by the factor of $0.8$ at the $400$th epoch. SLSGD with $\alpha=1$ and $b=0$ is the baseline \textit{FedAvg}. Note that we fix the random seeds. Thus, before $\alpha$ decays at the $400$th epoch, results with the same $\gamma$ and $b$ are the same.}
\label{fig:unbalanced}
\end{figure*}
\begin{figure*}[htb!]
\centering
\subfigure[Top-1 accuracy on testing set, $q=2$]{\includegraphics[width=0.49\textwidth,height=3.8cm]{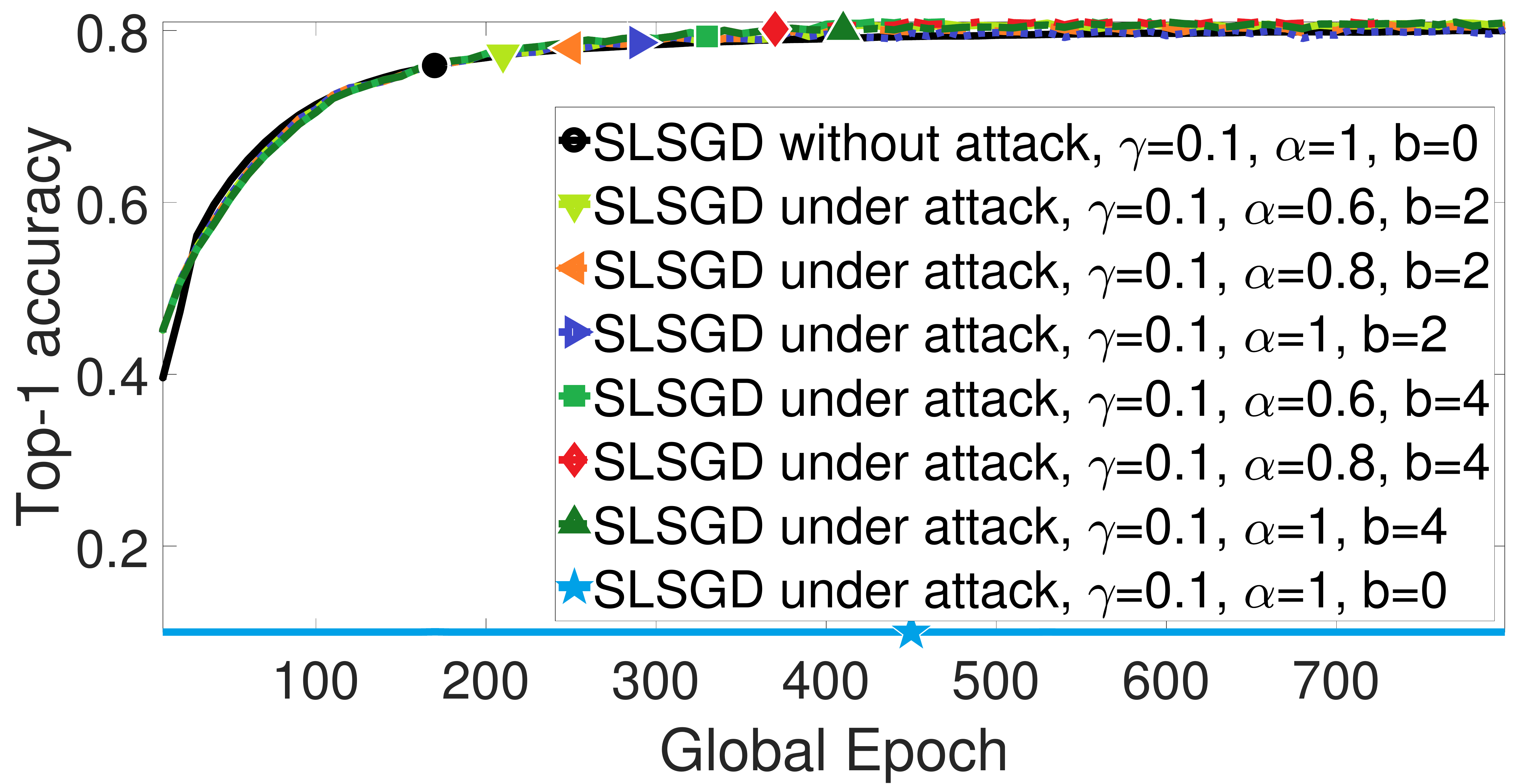}}
\subfigure[Cross entropy on training set, $q=2$]{\includegraphics[width=0.49\textwidth,height=3.8cm]{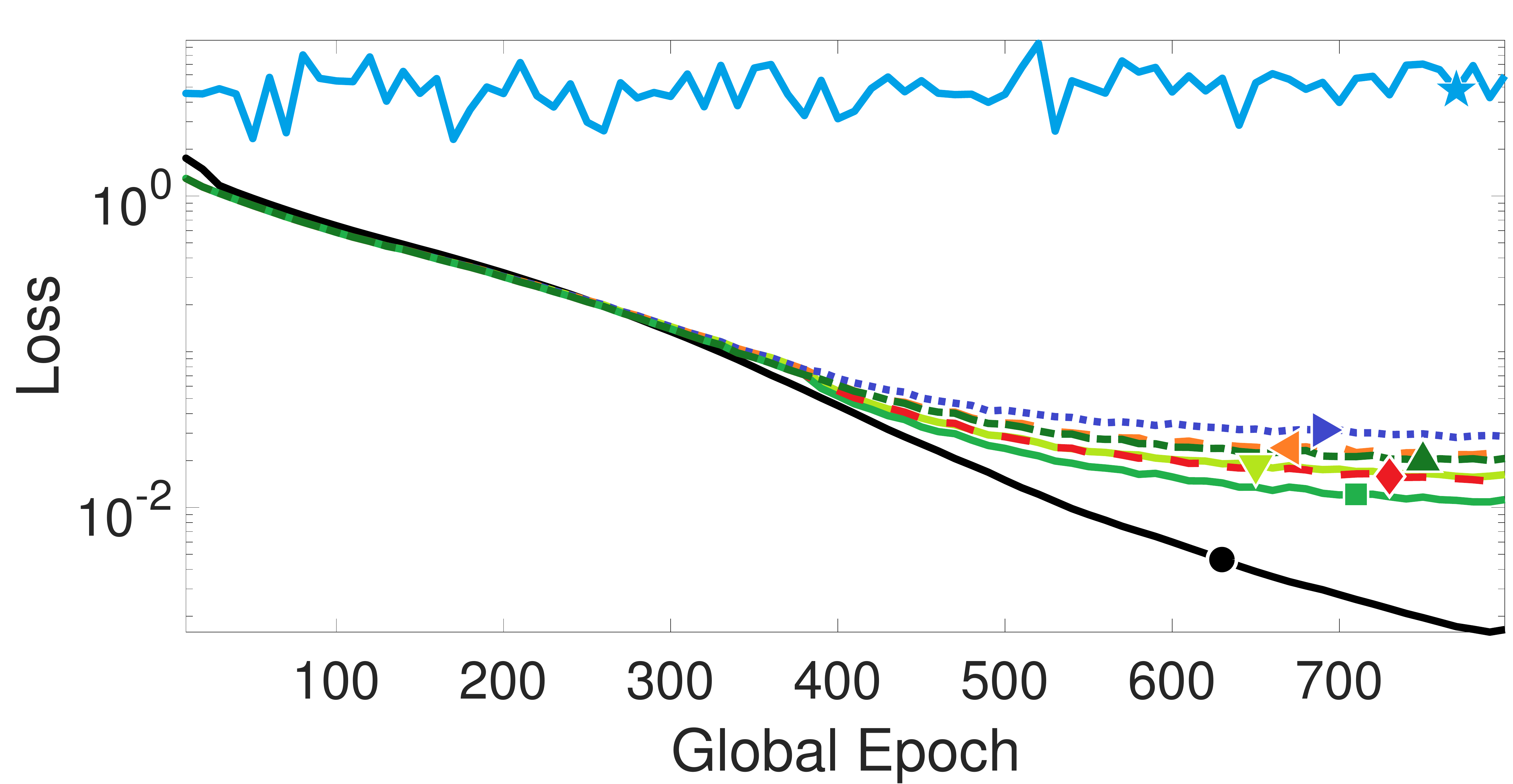}}
\subfigure[Top-1 accuracy on testing set, $q=4$]{\includegraphics[width=0.49\textwidth,height=3.8cm]{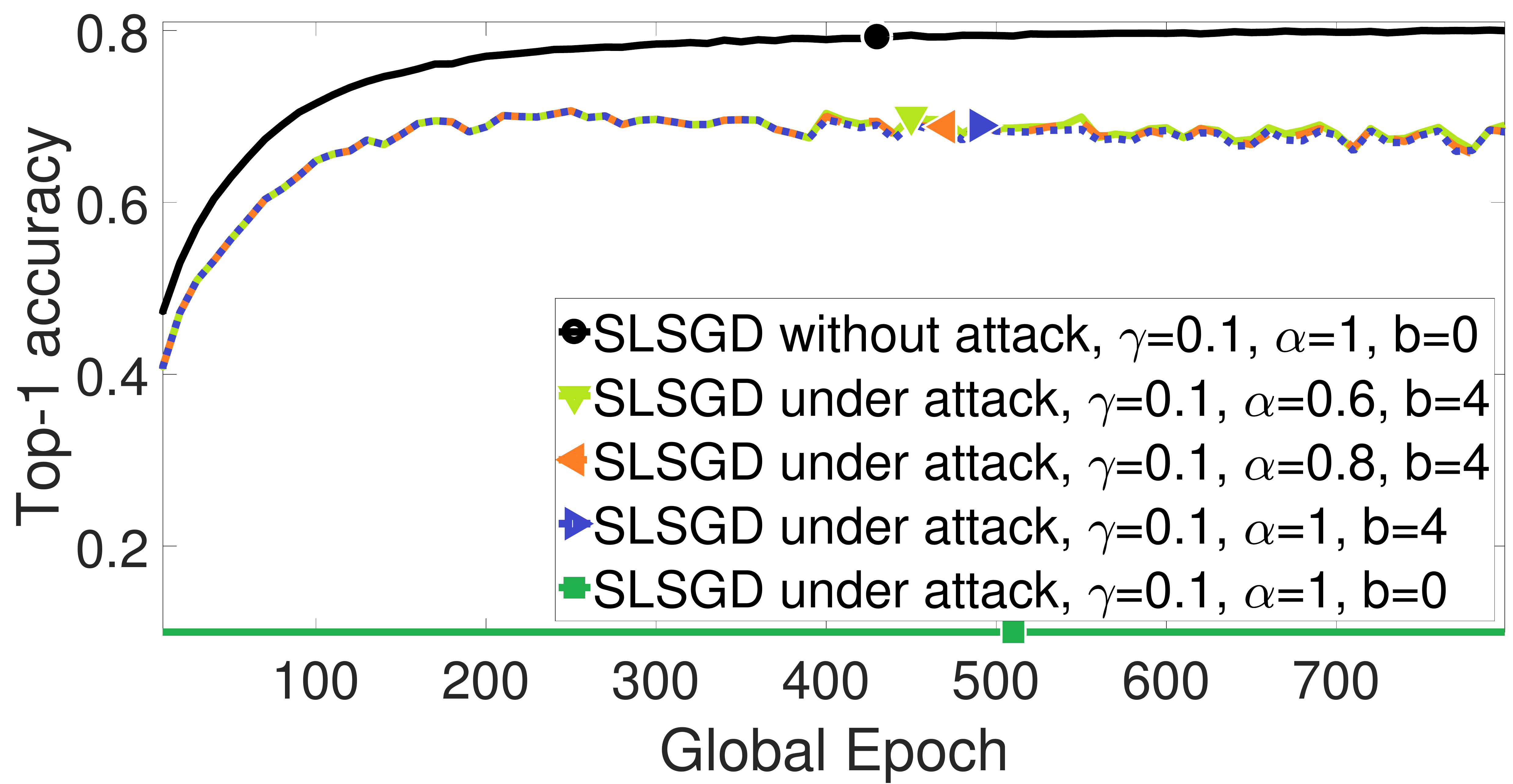}}
\subfigure[Cross entropy on training set, $q=4$]{\includegraphics[width=0.49\textwidth,height=3.8cm]{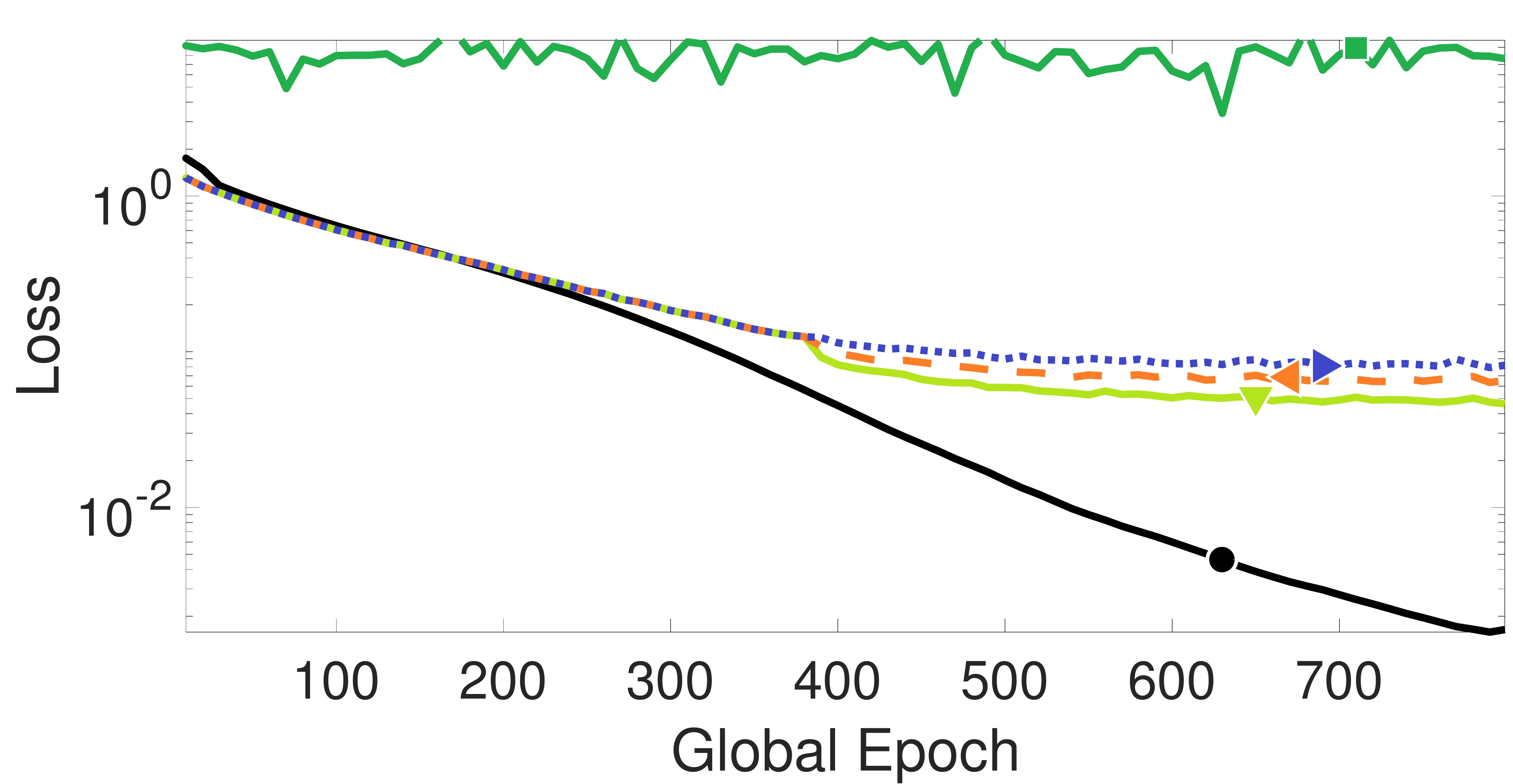}}
\caption{Convergence on training data with balanced partition, with ``label-flipping'' attack. In each epoch, we guarantee that $q \in \{2, 4\}$ of the $k=10$ selected workers are poisoned.  Each epoch is a full pass of the local training data. Legend ``SLSGD, $\gamma=0.1, \alpha=0.8, b=2$'' means that SLSGD takes the learning rate $0.1$ and $\trmean_2$ for aggregation, and the initial $\alpha=1$ decays by the factor of $0.8$ at the $400$th epoch. SLSGD with $\alpha=1$ and $b=0$ is the baseline \textit{FedAvg}. Note that we fix the random seeds. Thus, before $\alpha$ decays at the $400$th epoch, results with the same $\gamma$ and $b$ are the same.}
\label{fig:balanced_trim_labelflip}
\end{figure*}
\begin{figure*}[htb!]
\centering
\subfigure[Top-1 accuracy on testing set, $q=2$]{\includegraphics[width=0.49\textwidth,height=3.8cm]{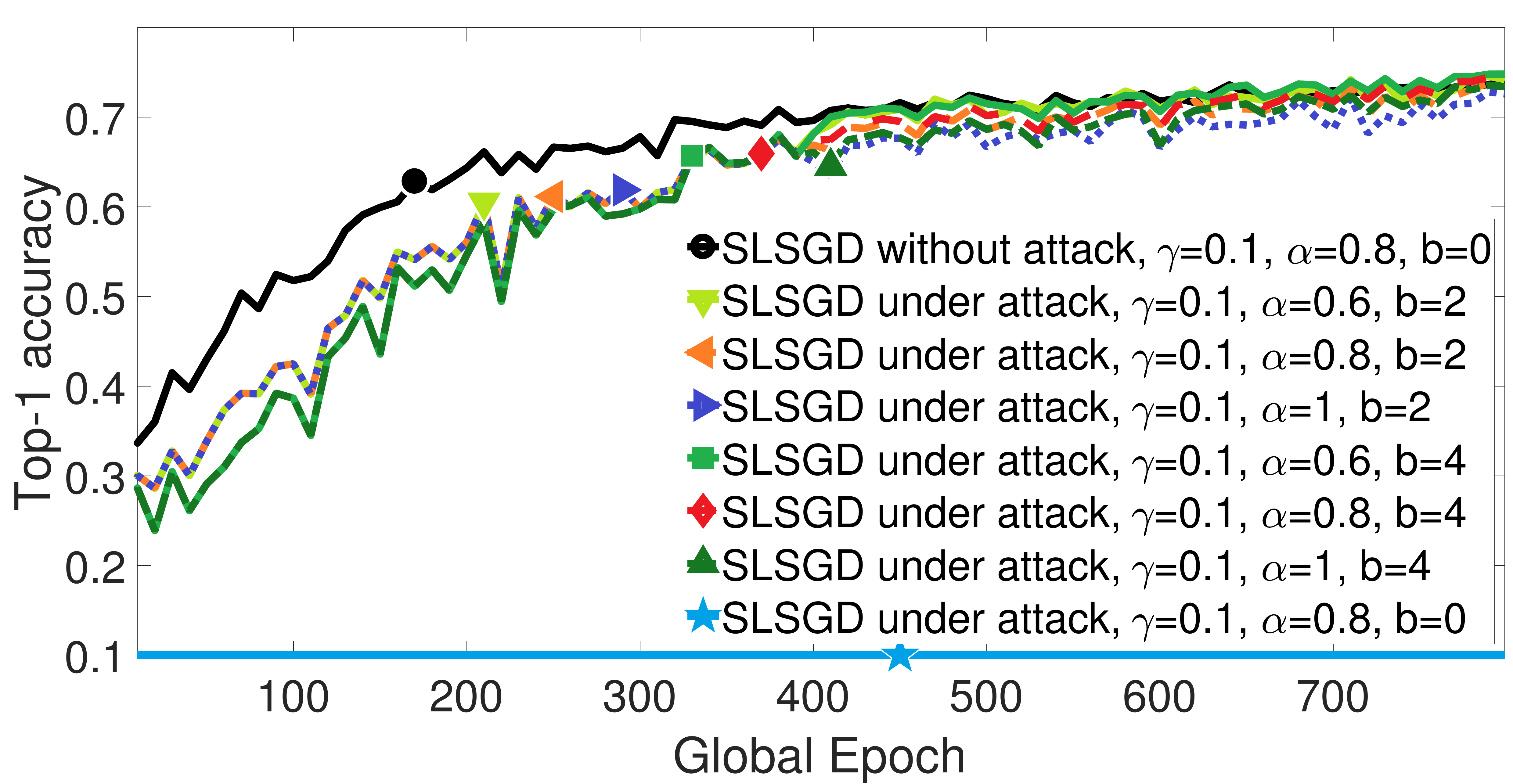}}
\subfigure[Cross entropy on training set, $q=2$]{\includegraphics[width=0.49\textwidth,height=3.8cm]{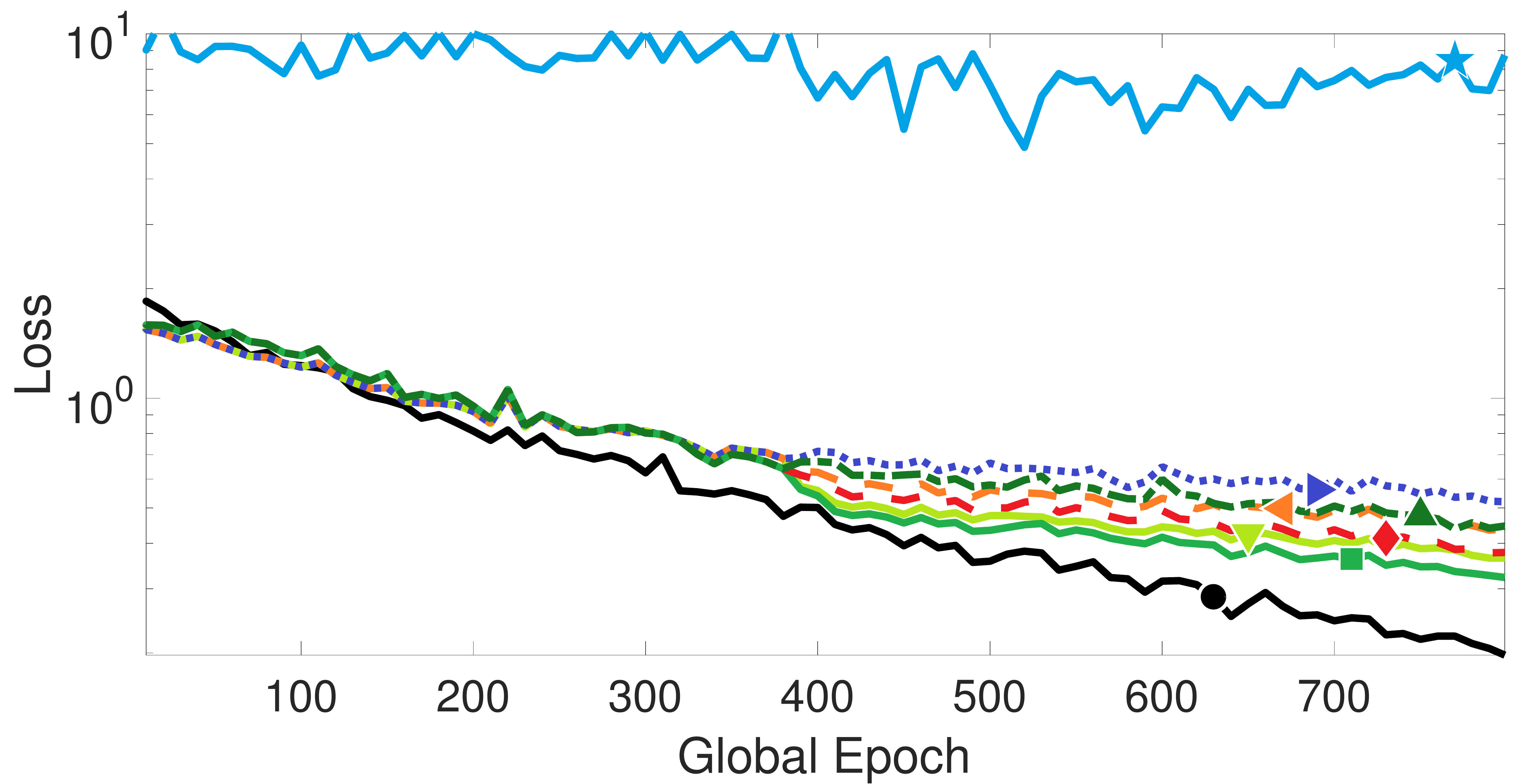}}
\subfigure[Top-1 accuracy on testing set, $q=4$]{\includegraphics[width=0.49\textwidth,height=3.8cm]{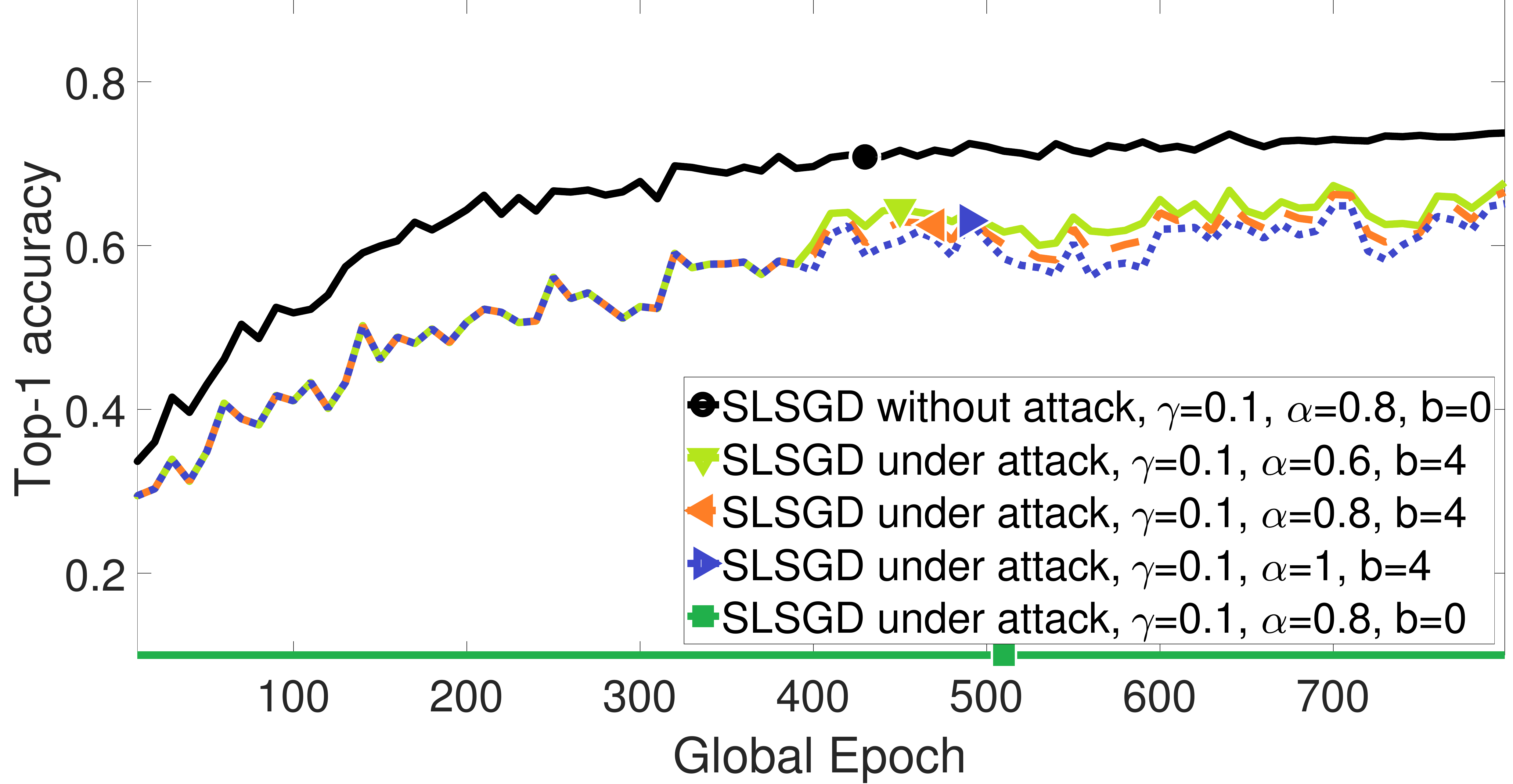}}
\subfigure[Cross entropy on training set, $q=4$]{\includegraphics[width=0.49\textwidth,height=3.8cm]{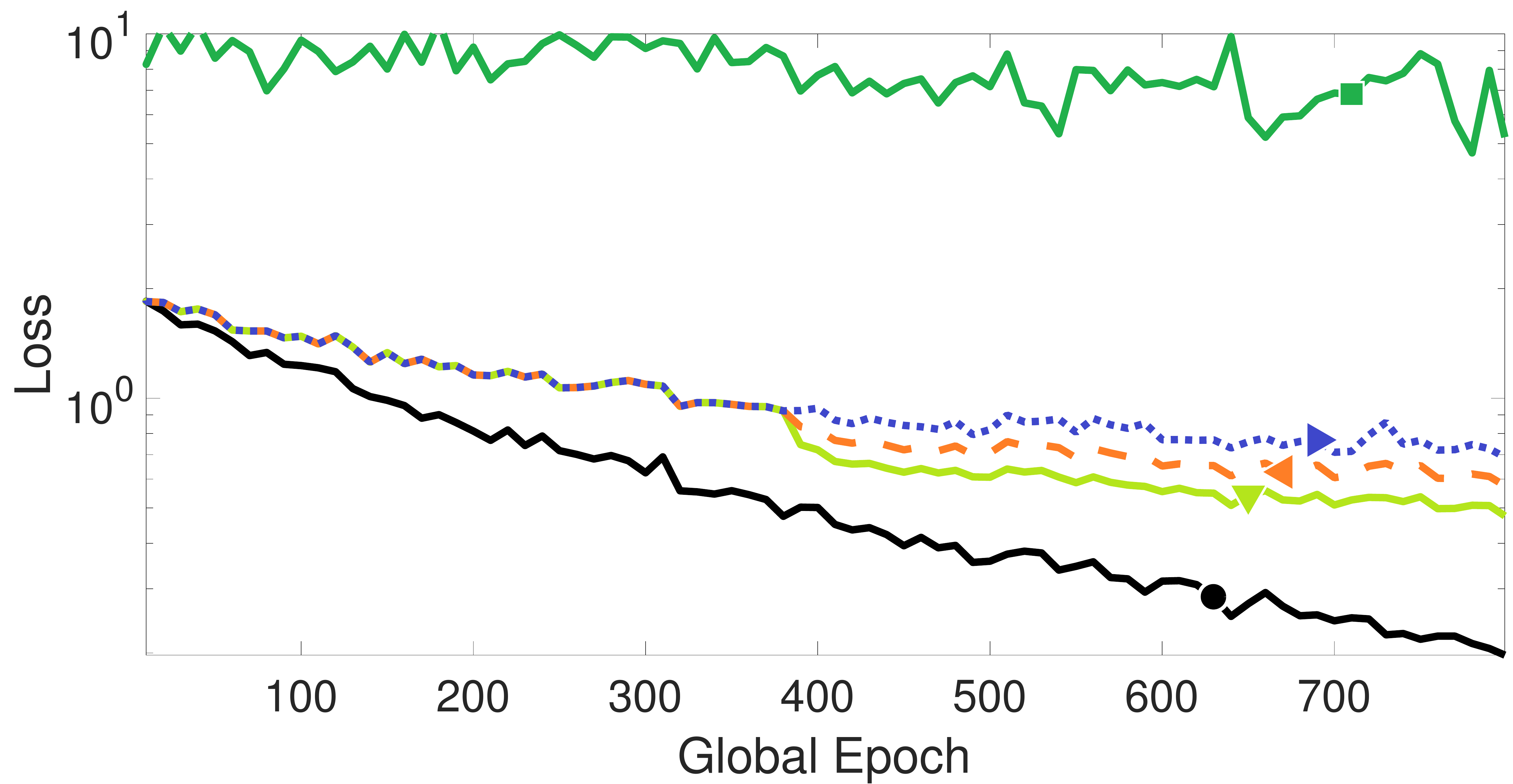}}
\caption{Convergence on training data with unbalanced partition, with ``label-flipping'' attack. In each epoch, we guarantee that $q$ of the $k=10$ selected workers are poisoned. Each epoch is a full pass of the local training data. Legend ``SLSGD, $\gamma=0.1, \alpha=0.8, b=2$'' means that SLSGD takes the learning rate $0.1$ and $\trmean_2$ for aggregation, and the initial $\alpha=1$ decays by the factor of $0.8$ at the $400$th epoch. SLSGD with $\alpha=1$ and $b=0$ is the baseline \textit{FedAvg}. Note that we fix the random seeds. Thus, before $\alpha$ decays at the $400$th epoch, results with the same $\gamma$ and $b$ are the same.}
\label{fig:unbalanced_trim_labelflip}
\end{figure*}

\subsection{SLSGD without Attack}

We first test the performance of SLSGD on the training data with balanced partition, without data poisoning. The result is shown in Fig.~\ref{fig:balanced}. When there are no poisoned workers, using trimmed mean results in extra variance. Although larger $b$ and smaller $\alpha$ makes the convergence slower, the gap is tiny. In general, SLSGD is insensitive to hyperparameters.

Then, we test the performance with unbalanced partition, without data poisoning. The result is shown in Fig.~\ref{fig:unbalanced}. Note that the convergence with unbalanced partition is generally slower compared to balanced partition due to the larger variance. Using appropriate $\alpha$~($\alpha=0.8$) can mitigate such extra variance. 

\subsection{SLSGD under Data Poisoning Attack}

To test the tolerance to poisoned workers, we simulate data poisoning which ``flips" the labels of the local training data. The poisoned data have ``flipped" labels, i.e., each $label \in \{ 0, \ldots, 9 \}$ in the local training data will be replaced by $(9 - label)$. The experiment is set up so that in each epoch, in all the $k=10$ randomly selected workers, $q$ workers are compromised and subjected to data poisoning. 
The results are shown in Fig.~\ref{fig:balanced_trim_labelflip} and Fig.~\ref{fig:unbalanced_trim_labelflip}. We use FedAvg/SLSGD without data poisoning~(Option I) as the ideal benchmark. As expected, SLSGD without trimmed mean can not tolerate data poisoning, which causes catastrophic failure. SLSGD with Option II tolerates the poisoned worker, though converges slower compared to SLSGD without data poisoning. Furthermore, larger $b$ and smaller $\alpha$ improves the robustness and stabilizes the convergence.

Note that taking $q=4$ in every epoch pushes to the limit of SLSGD since the algorithm requires $2q < k$. In practice, if there are totally $q=4$ poisoned workers in the entire $n=100$ workers, there is no guarantee that the poisoned workers will always be activated in each epoch. Poisoning 40\% of the sampled data in each epoch incurs huge noise, while SLSGD can still prevent the global model from divergence.

In Fig.~\ref{fig:alpha_b}, we show how $\alpha$ and $b$ affect the convergence when data poisoning and unbalanced partition cause extra error and variance. In such scenario, larger $b$ and smaller $\alpha$ makes SLSGD more robust and converge faster.

\begin{figure*}[htb!]
\centering
\includegraphics[width=0.75\textwidth,height=3cm]{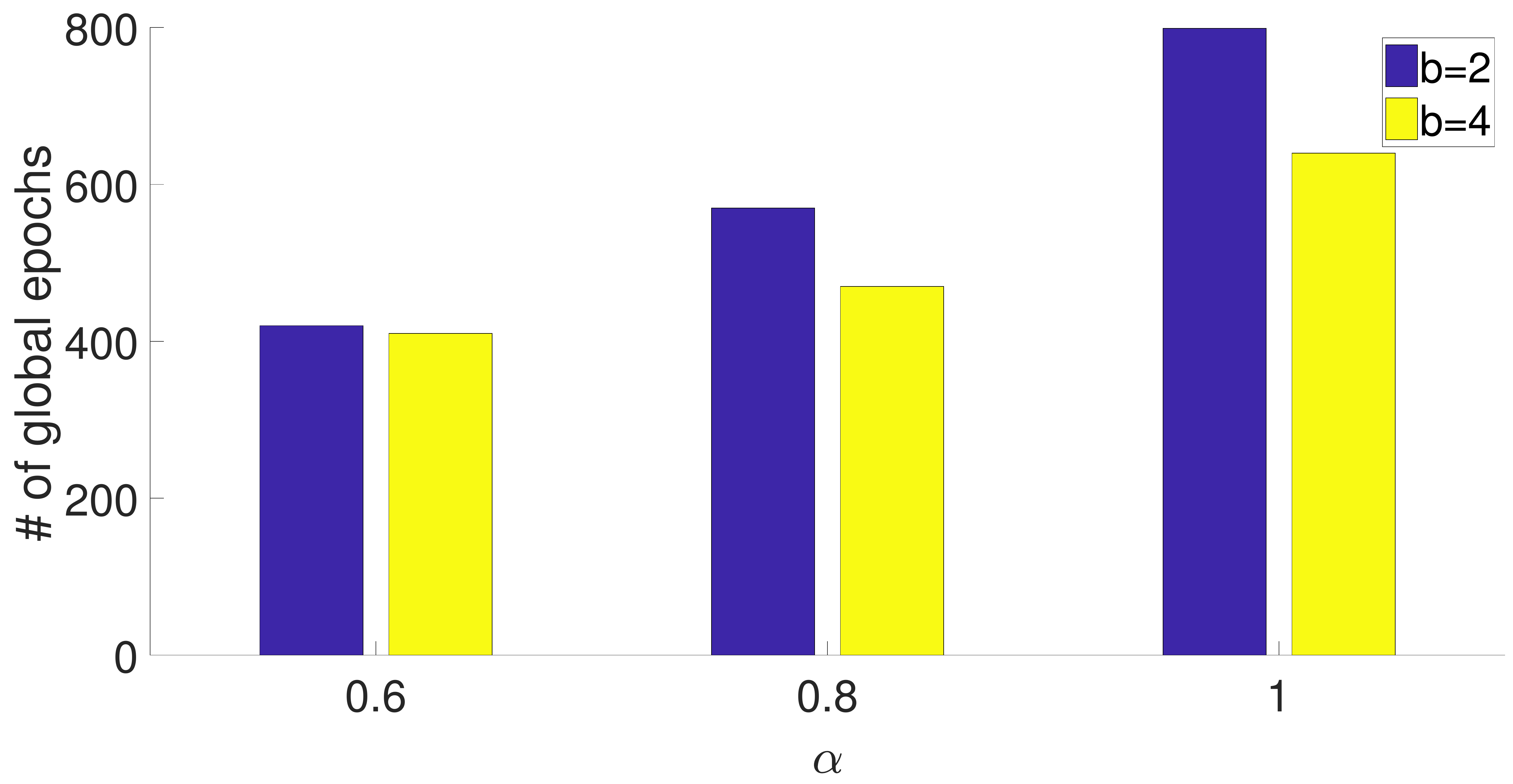}
\caption{Number of global epochs to reach training loss value $0.5$, with unbalanced partition and $q=2$ poisoned workers. $\gamma=0.1$. $\alpha$ and $b$ varies. ``$\alpha$'' on the x-axis is the initial value of $\alpha$, which does not decay during training.}
\label{fig:alpha_b}
\end{figure*}
\begin{figure*}[htb!]
\centering
\includegraphics[width=0.75\textwidth,height=3cm]{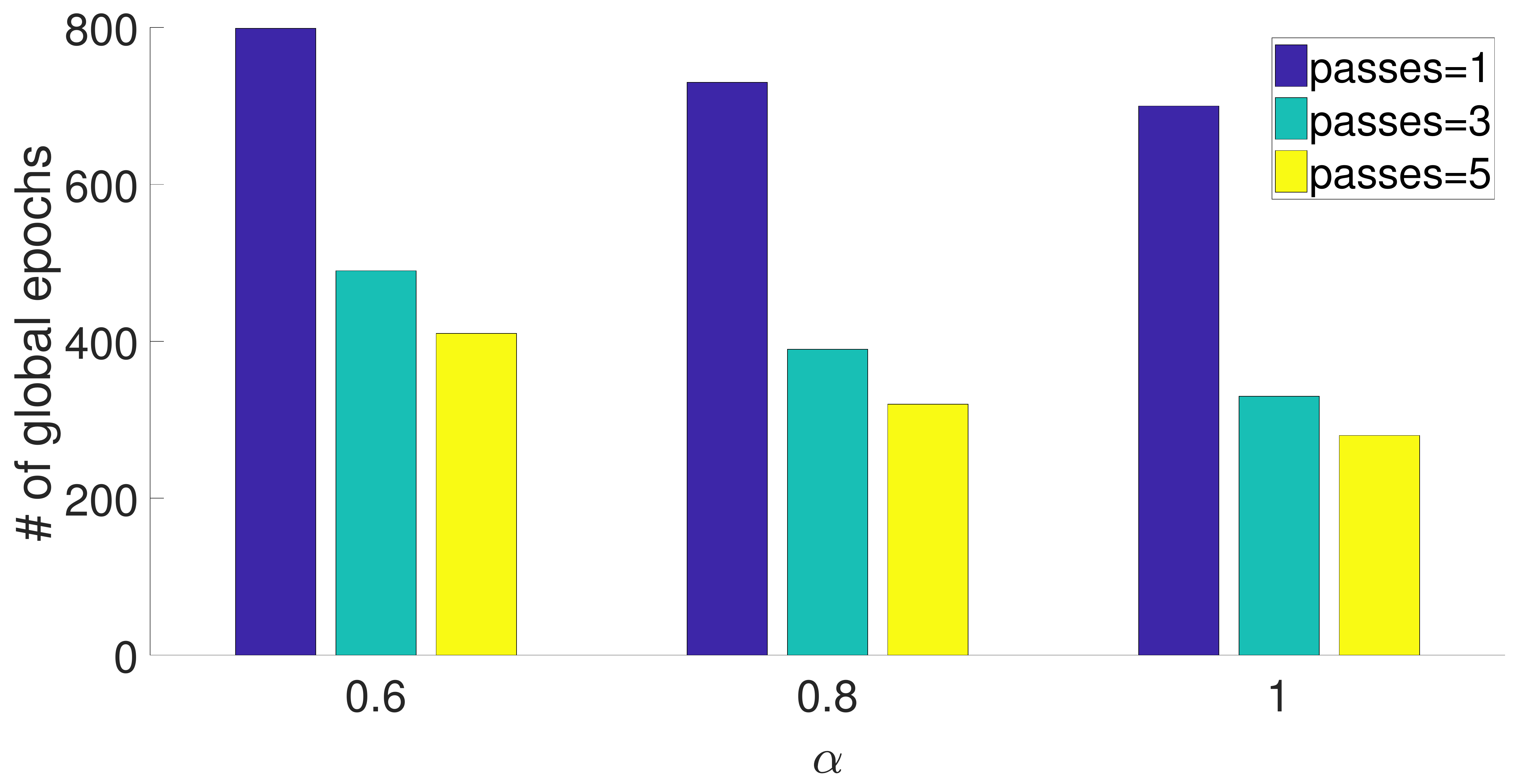}
\caption{Number of global epochs to reach training loss value $0.003$, with balanced partition, without poisoned workers. $\gamma=0.1$. $\alpha$ and number of local iterations varies. ``pass=3'' means each epoch is 3 full passes of the local datasets on the selected workers. ``$\alpha$'' on the x-axis is the initial value of $\alpha$, which does not decay during training.}
\label{fig:iterations}
\end{figure*}

\subsection{Acceleration by Local Updates}
According to our theoretical analysis, more local updates in each epoch accelerate the convergence. We test this theory in Fig.~\ref{fig:iterations} with balanced partition, without data poisoning. In the legend, ``pass=3'' means each epoch is 3 full passes of the local datasets~($H = 3 \times 500 / 50 = 30$ local iterations) on the selected workers. We show that with more local iterations, SLSGD converges faster.

\subsection{Discussion}

The hyperparameters of SLSGD affects the convergence differently in different scenarios:
\setitemize[0]{leftmargin=*}
\begin{itemize}
\item \textit{Balanced partition, no attacks.} In this case, the overall variance is relatively small. Thus, it is not necessary to use smaller $\alpha$ to mitigate the variance. The extra variance caused by trimmed mean slows down the convergence.
Since the variance does not dominate, smaller $\alpha$ and larger $b$ potentially slow down the convergence, but the gap is tiny.

\item \textit{Unbalanced partition, no attacks.} In this case, the overall variance is larger than the balanced case. 
Note that not only the size of local datasets, but also the label distribution are unbalanced among the devices. Some partitions only contains one label, which enlarges the accumulative error caused by infrequent synchronization and overfitting the local training data.
Thus, using appropriate $\alpha$ can mitigate the variance. However, it is not necessary to use the trimmed mean, since the variance caused by unbalanced partition is not too bad compared to data poisoning.

\item \textit{Balanced partition, under attacks.} In this case, the error caused by poisoned workers dominates. We must use trimmed mean to prevent divergence.  Larger $b$ improves the robustness and convergence. Furthermore, using smaller $\alpha$ also mitigates the error and improves the convergence.

\item \textit{Unbalanced partition, under attacks.} In this case, the error caused by poisoned workers still dominates. In general, the usage of hyperparameters is similar to the case of balanced partition under attacks. However, the unbalanced partition makes it more difficult to distinguish poisoned workers from normal workers. As a result, the convergence gets much slower. Smaller $\alpha$ obtain more improvement and better stabilization.
\end{itemize}

In general, there is a trade-off between convergence rate and variance/error reduction. In the ideal case, if the variance is very small, SLSGD with $\alpha=1$ and $b=0$, i.e., \textit{FedAvg}, has fastest convergence. Using other hyperparameters slightly slows down the convergence, but the gap is tiny. When variance gets larger, users can try smaller $\alpha$. When the variance/error gets catastrophically large, the users can use the trimmed mean to prevent divergence.

\section{Conclusion}

We propose a novel distributed optimization algorithm on non-IID training data, which has limited communication and tolerates poisoned workers. The algorithm has provable convergence. Our empirical results show good performance in practice. In future work, we are going to analyze our algorithm on other threat models, such as hardware or software failures.

\bibliography{fedrob_ecml2019}
\bibliographystyle{splncs04}

\newpage
\onecolumn

\vspace*{0.1cm}
\begin{center}
	\Large\textbf{Appendix}
\end{center}
\vspace*{0.1cm}

\section{Additional Experiments}

\subsection{CNN on CIFAR-10}

We conduct experiments on the benchmark CIFAR-10 image classification dataset~\cite{krizhevsky2009learning}, which is composed of 50k images for training and 10k images for testing. Each image is resized and cropped to the shape of $(24,24,3)$. We use convolutional neural network~(CNN) with 4 convolutional layers followed by 1 fully connected layer. We use a simple network architecture, so that it can be easily handled by edge devices.  The detailed network architecture can be found in our submitted source code (will also be released upon publication).
The experiments are conducted on CPU devices. We implement SLSGD using the MXNET~\cite{chen2015mxnet} framework. The results are shown in Fig. \ref{fig:balanced_appendix}, \ref{fig:unbalanced_appendix}, \ref{fig:balanced_trim_labelflip_appendix}, and \ref{fig:unbalanced_trim_labelflip_appendix}.

\begin{figure*}[htbp!]
\centering
\subfigure[Top-1 accuracy on testing set, $\gamma=0.1$]{\includegraphics[width=0.8\textwidth,height=4cm]{fedrob_acc_balanced_nobyz_1}}
\subfigure[Cross entropy on training set, $\gamma=0.1$]{\includegraphics[width=0.8\textwidth,height=4cm]{fedrob_loss_balanced_nobyz_1}}
\subfigure[Top-1 accuracy on testing set, $\gamma=0.05$]{\includegraphics[width=0.8\textwidth,height=4cm]{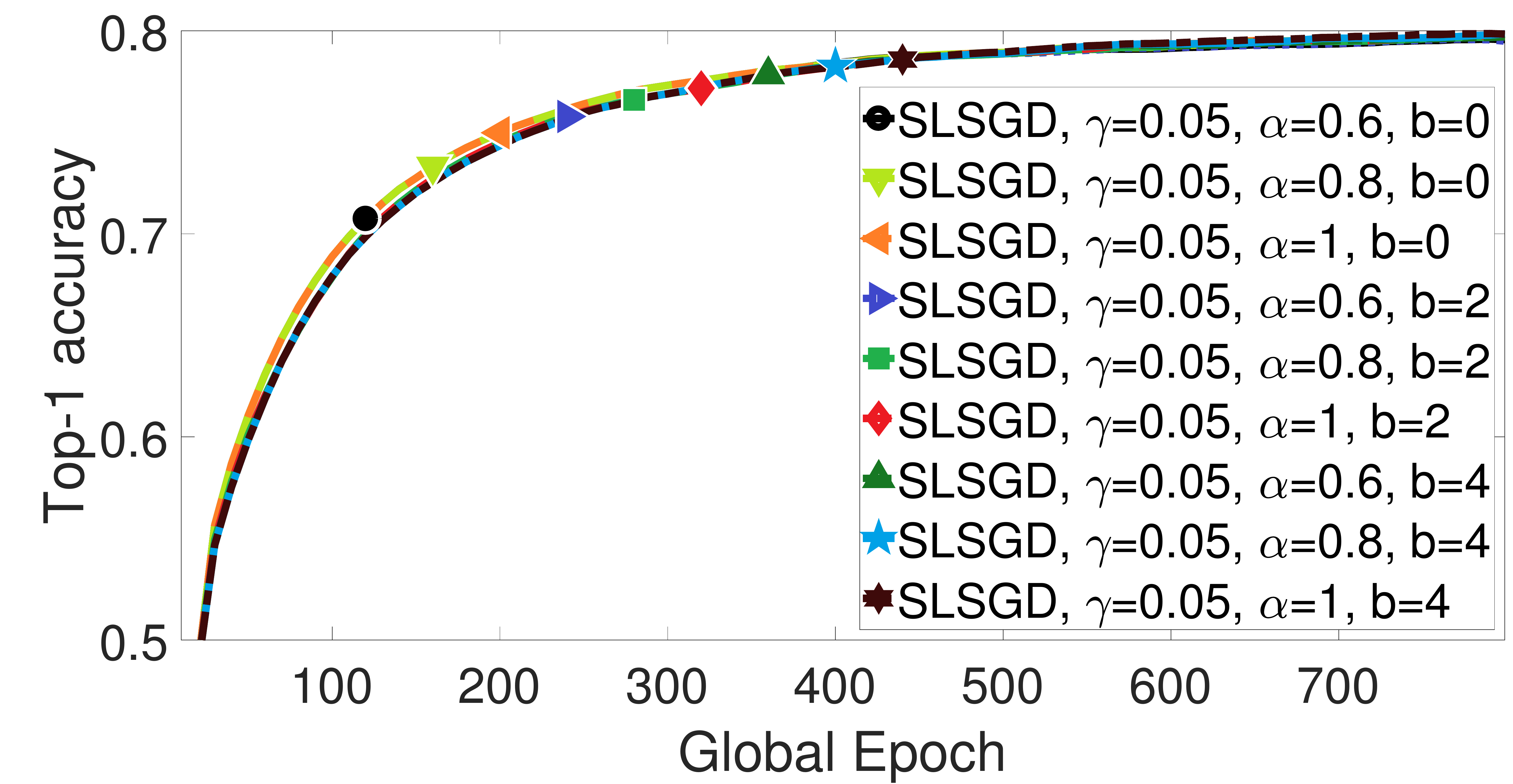}}
\subfigure[Cross entropy on training set, $\gamma=0.05$]{\includegraphics[width=0.8\textwidth,height=4cm]{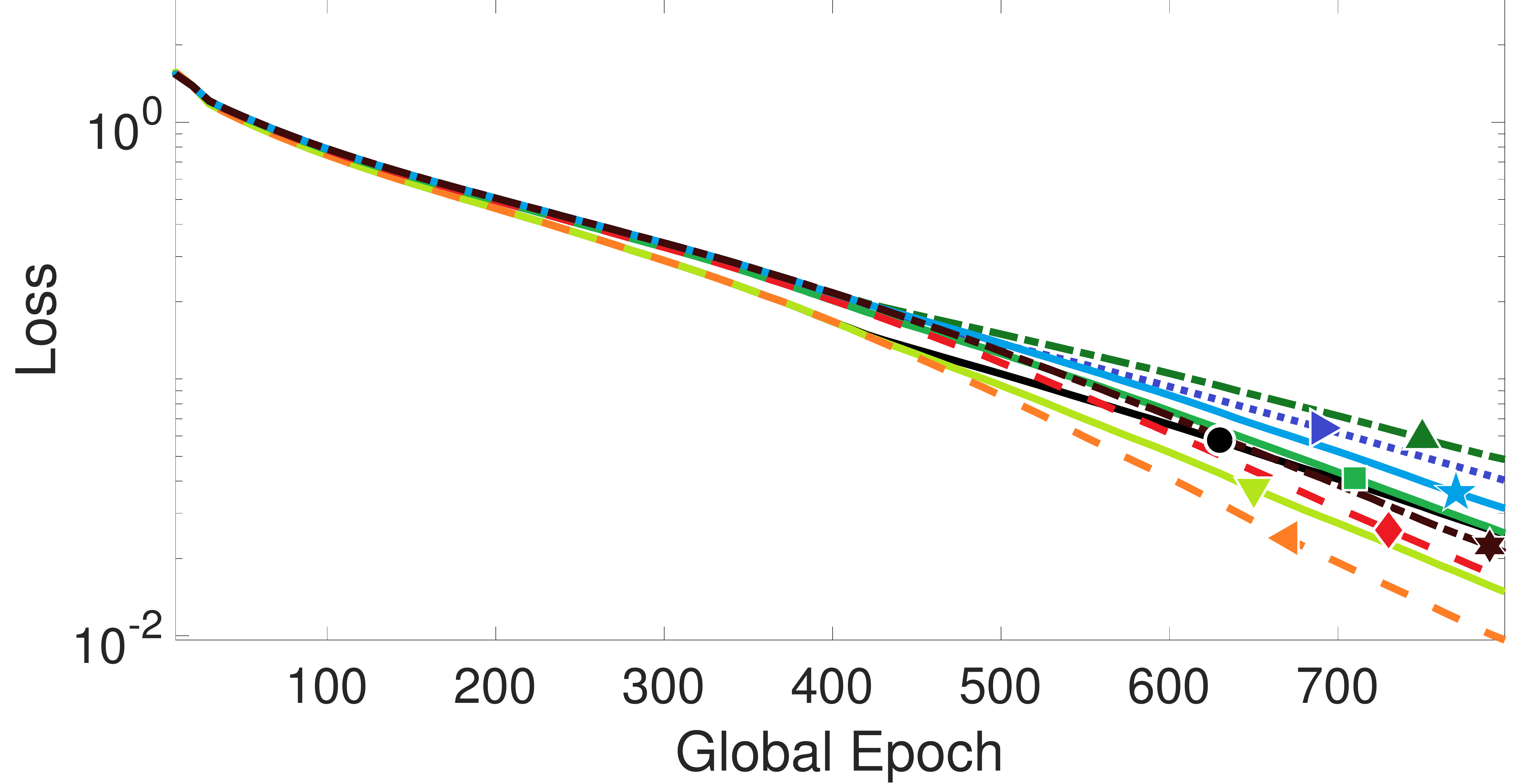}}
\caption{CNN experiment on CIFAR-10. Convergence on training data with unbalanced partition, without attack. Each epoch is a full pass of the local training data. Legend ``SLSGD, $\gamma=0.1, \alpha=0.8, b=2$'' means that SLSGD takes the learning rate $0.1$ and $\trmean_2$ for aggregation, and the initial $\alpha=1$ decays by the factor of $0.8$ at the $400$th epoch. Note that SLSGD with $\alpha=1$ and $b=0$ is the baseline \textit{FedAvg}. Note that we fix the random seeds. Thus, before $\alpha$ decays at the $400$th epoch, results with the same $\gamma$ and $b$ are the same.}
\label{fig:balanced_appendix}
\end{figure*}

\begin{figure*}[htbp!]
\centering
\subfigure[Top-1 accuracy on testing set, $\gamma=0.1$]{\includegraphics[width=0.8\textwidth,height=4cm]{fedrob_acc_unbalanced_nobyz_1}}
\subfigure[Cross entropy on training set, $\gamma=0.1$]{\includegraphics[width=0.8\textwidth,height=4cm]{fedrob_loss_unbalanced_nobyz_1}}
\subfigure[Top-1 accuracy on testing set, $\gamma=0.05$]{\includegraphics[width=0.8\textwidth,height=4cm]{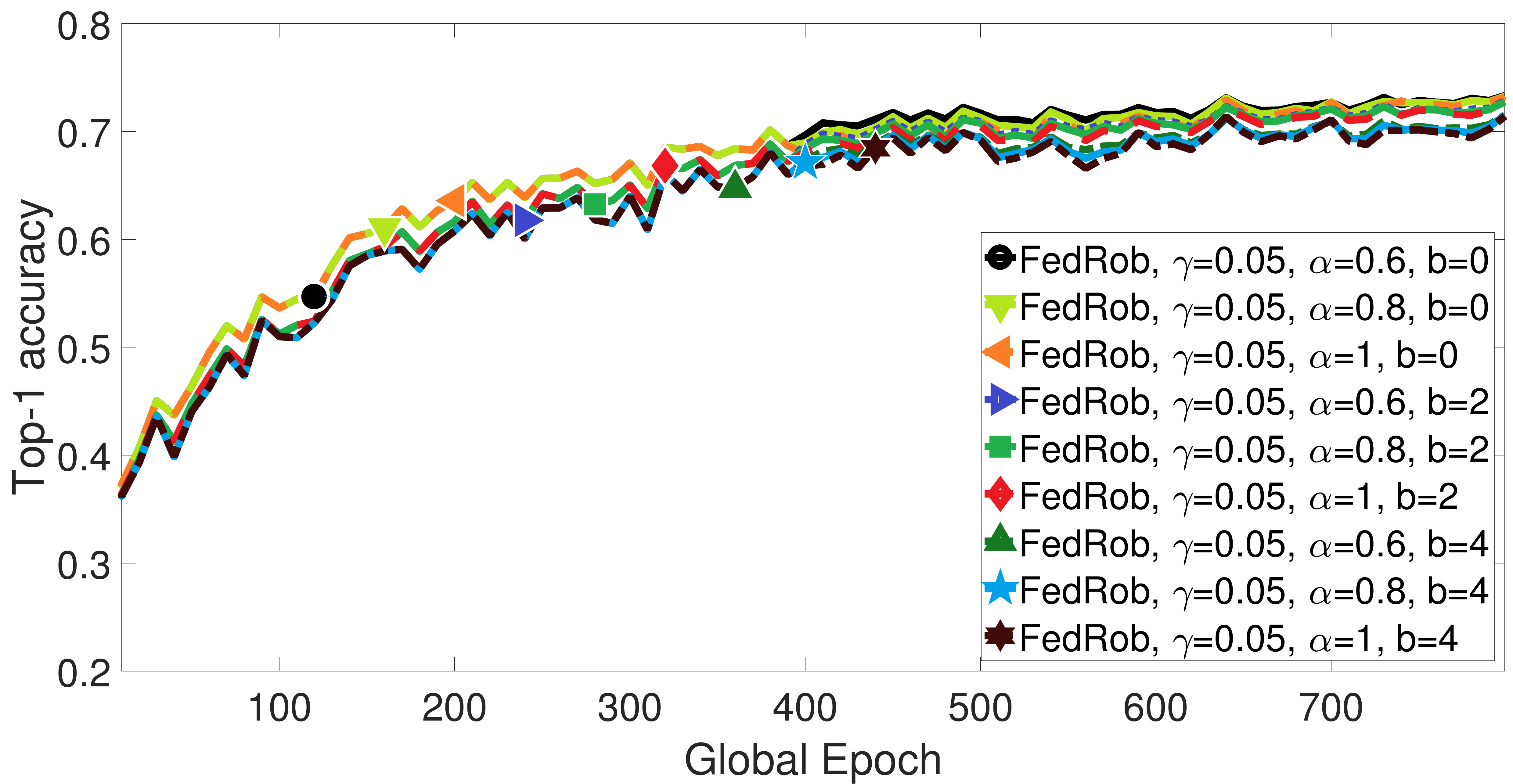}}
\subfigure[Cross entropy on training set, $\gamma=0.05$]{\includegraphics[width=0.8\textwidth,height=4cm]{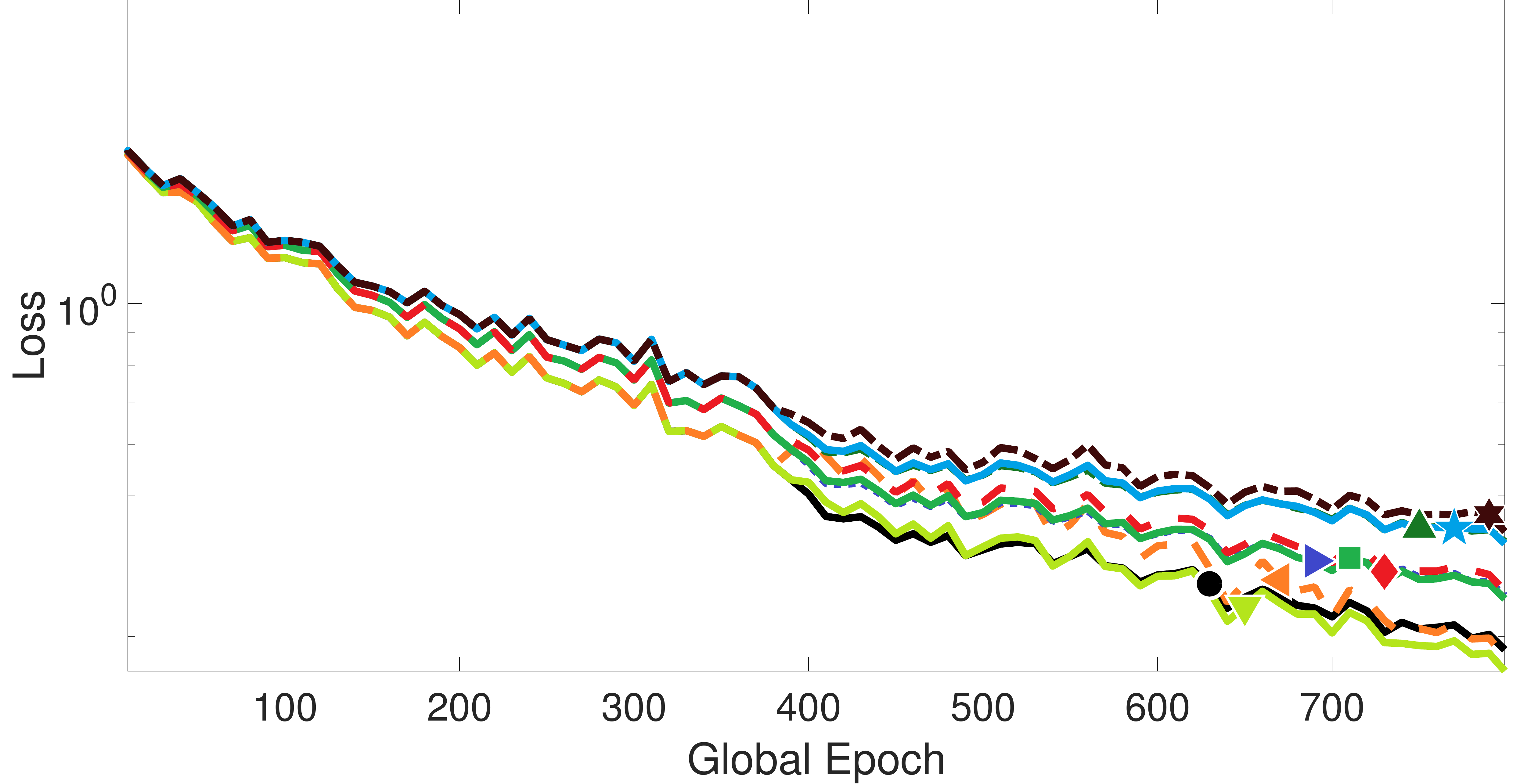}}
\caption{CNN experiment on CIFAR-10. Convergence on training data with unbalanced partition, without attack. Each epoch is a full pass of the local training data. Legend ``SLSGD, $\gamma=0.1, \alpha=0.8, b=2$'' means that SLSGD takes the learning rate $0.1$ and $\trmean_2$ for aggregation, and the initial $\alpha=1$ decays by the factor of $0.8$ at the $400$th epoch. Note that SLSGD with $\alpha=1$ and $b=0$ is the baseline \textit{FedAvg}. Note that we fix the random seeds. Thus, before $\alpha$ decays at the $400$th epoch, results with the same $\gamma$ and $b$ are the same.}
\label{fig:unbalanced_appendix}
\end{figure*}

\begin{figure*}[htbp!]
\centering
\subfigure[Top-1 accuracy on testing set, $q=2$]{\includegraphics[width=0.8\textwidth,height=4cm]{fedrob_acc_balanced_byz_2}}
\subfigure[Cross entropy on training set, $q=2$]{\includegraphics[width=0.8\textwidth,height=4cm]{fedrob_loss_balanced_byz_2}}
\subfigure[Top-1 accuracy on testing set, $q=4$]{\includegraphics[width=0.8\textwidth,height=4cm]{fedrob_acc_balanced_byz_4}}
\subfigure[Cross entropy on training set, $q=4$]{\includegraphics[width=0.8\textwidth,height=4cm]{fedrob_loss_balanced_byz_4}}
\caption{CNN experiment on CIFAR-10. Convergence on training data with balanced partition, with ``label-flipping'' attack. In each epoch, we guarantee that $q \in \{2, 4\}$ of the $k=10$ selected workers are poisoned.  Each epoch is a full pass of the local training data. Legend ``SLSGD, $\gamma=0.1, \alpha=0.8, b=2$'' means that SLSGD takes the learning rate $0.1$ and $\trmean_2$ for aggregation, and the initial $\alpha=1$ decays by the factor of $0.8$ at the $400$th epoch. Note that SLSGD with $\alpha=1$ and $b=0$ is the baseline \textit{FedAvg}. Note that we fix the random seeds. Thus, before $\alpha$ decays at the $400$th epoch, results with the same $\gamma$ and $b$ are the same.}
\label{fig:balanced_trim_labelflip_appendix}
\end{figure*}
\begin{figure*}[htbp!]
\centering
\subfigure[Top-1 accuracy on testing set, $q=2$]{\includegraphics[width=0.8\textwidth,height=4cm]{fedrob_acc_unbalanced_byz_2}}
\subfigure[Cross entropy on training set, $q=2$]{\includegraphics[width=0.8\textwidth,height=4cm]{fedrob_loss_unbalanced_byz_2}}
\subfigure[Top-1 accuracy on testing set, $q=4$]{\includegraphics[width=0.8\textwidth,height=4cm]{fedrob_acc_unbalanced_byz_4}}
\subfigure[Cross entropy on training set, $q=4$]{\includegraphics[width=0.8\textwidth,height=4cm]{fedrob_loss_unbalanced_byz_4}}
\caption{CNN experiment on CIFAR-10. Convergence on training data with unbalanced partition, with ``label-flipping'' attack. In each epoch, we guarantee that $q$ of the $k=10$ selected workers are poisoned. Each epoch is a full pass of the local training data. Legend ``SLSGD, $\gamma=0.1, \alpha=0.8, b=2$'' means that SLSGD takes the learning rate $0.1$ and $\trmean_2$ for aggregation, and the initial $\alpha=1$ decays by the factor of $0.8$ at the $400$th epoch. Note that SLSGD with $\alpha=1$ and $b=0$ is the baseline \textit{FedAvg}. Note that we fix the random seeds. Thus, before $\alpha$ decays at the $400$th epoch, results with the same $\gamma$ and $b$ are the same.}
\label{fig:unbalanced_trim_labelflip_appendix}
\end{figure*}

\subsection{LSTM on WikiText-2}

We conduct experiments on WikiText-2 dataset~\cite{merity2016pointer}. We use LSTM-based language model. The model architecture was taken from the MXNET and Gluon-NLP tutorial~\cite{gluonnlp}. The detailed network architecture can be found in our submitted source code (will also be released upon publication).
The experiments are conducted on CPU devices. We implement SLSGD using the MXNET~\cite{chen2015mxnet} framework.

Similar to the CNN experiments, the dataset is partitioned onto 100 devices. In each global epoch, 10 devices are activated for training. The results are shown in Fig.~\ref{fig:lstm_balanced_nobyz}, \ref{fig:lstm_balanced_byz}, and \ref{fig:lstm_balanced_byz_2}. For the poisoned workers, the labels of the local training data are randomly permuted.

In general, we get similar results as CNN on CIFAR-10.

\begin{figure*}[htb!]
\centering
\includegraphics[width=0.9\textwidth]{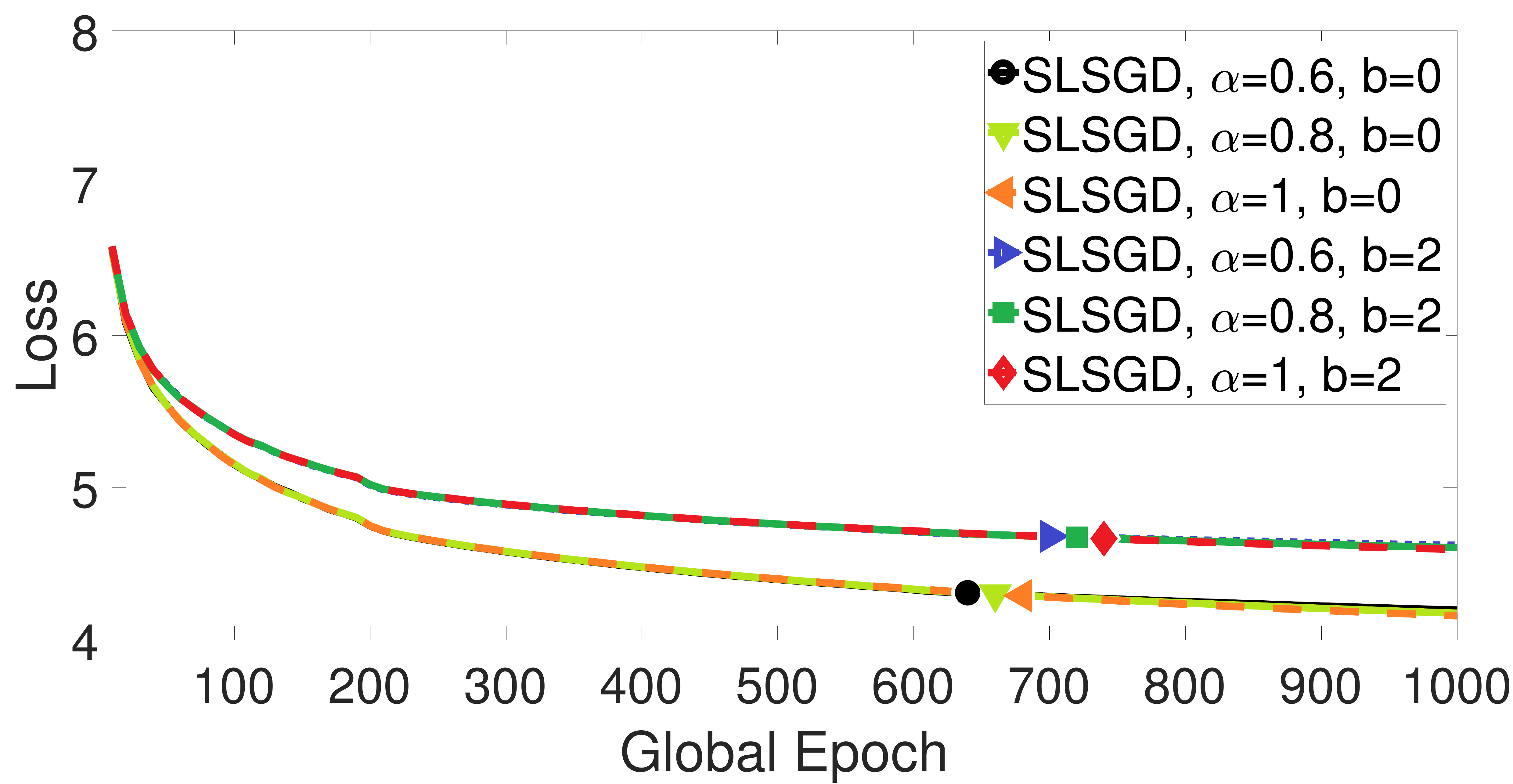}
\caption{LSTM experiment on WikiText-2. Convergence on training data with balanced partition, without attack. Each epoch is a full pass of the local training data. We take learning rate $\gamma=20$. Legend ``SLSGD, $\alpha=0.8, b=2$'' means that SLSGD takes $\trmean_2$ for aggregation, and the initial $\alpha=1$ decays by the factor of $0.8$ at the $600$th epoch. Note that SLSGD with $\alpha=1$ and $b=0$ is the baseline \textit{FedAvg}. Note that we fix the random seeds. Thus, before $\alpha$ decays at the $600$th epoch, results with the same $\gamma$ and $b$ are the same.}
\label{fig:lstm_balanced_nobyz}
\end{figure*}
\begin{figure*}[htb!]
\centering
\includegraphics[width=0.9\textwidth]{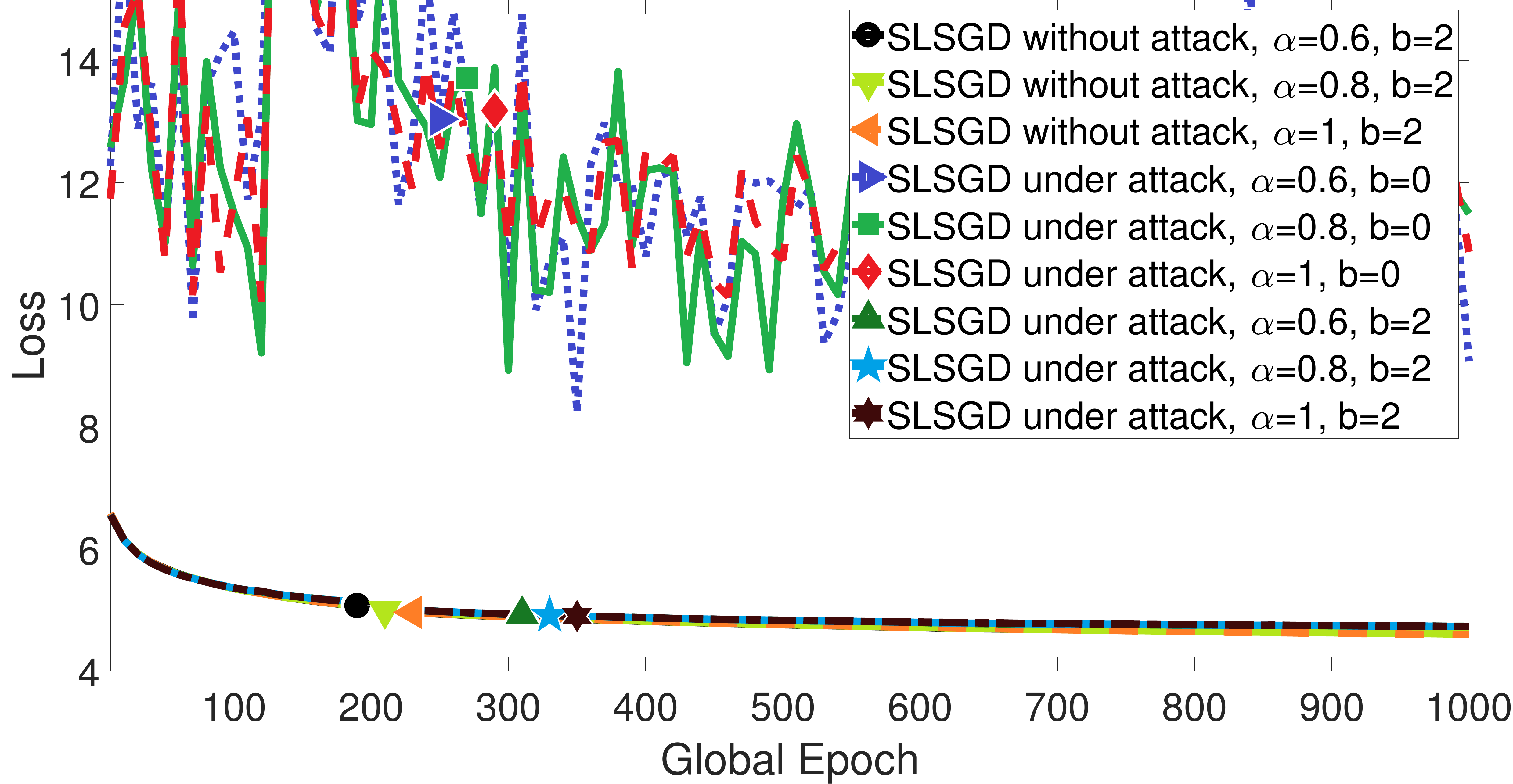}
\caption{LSTM experiment on WikiText-2. Convergence on training data with balanced partition, under attack. In each epoch, we guarantee that $q=2$ of the $k=10$ selected workers are training on poisoned data. Each epoch is a full pass of the local training data. We take learning rate $\gamma=20$. Legend ``SLSGD, $\alpha=0.8, b=2$'' means that SLSGD takes $\trmean_2$ for aggregation, and the initial $\alpha=1$ decays by the factor of $0.8$ at the $600$th epoch. Note that SLSGD with $\alpha=1$ and $b=0$ is the baseline \textit{FedAvg}. Note that we fix the random seeds. Thus, before $\alpha$ decays at the $600$th epoch, results with the same $\gamma$ and $b$ are the same.}
\label{fig:lstm_balanced_byz}
\end{figure*}
\begin{figure*}[htb!]
\centering
\includegraphics[width=0.99\textwidth]{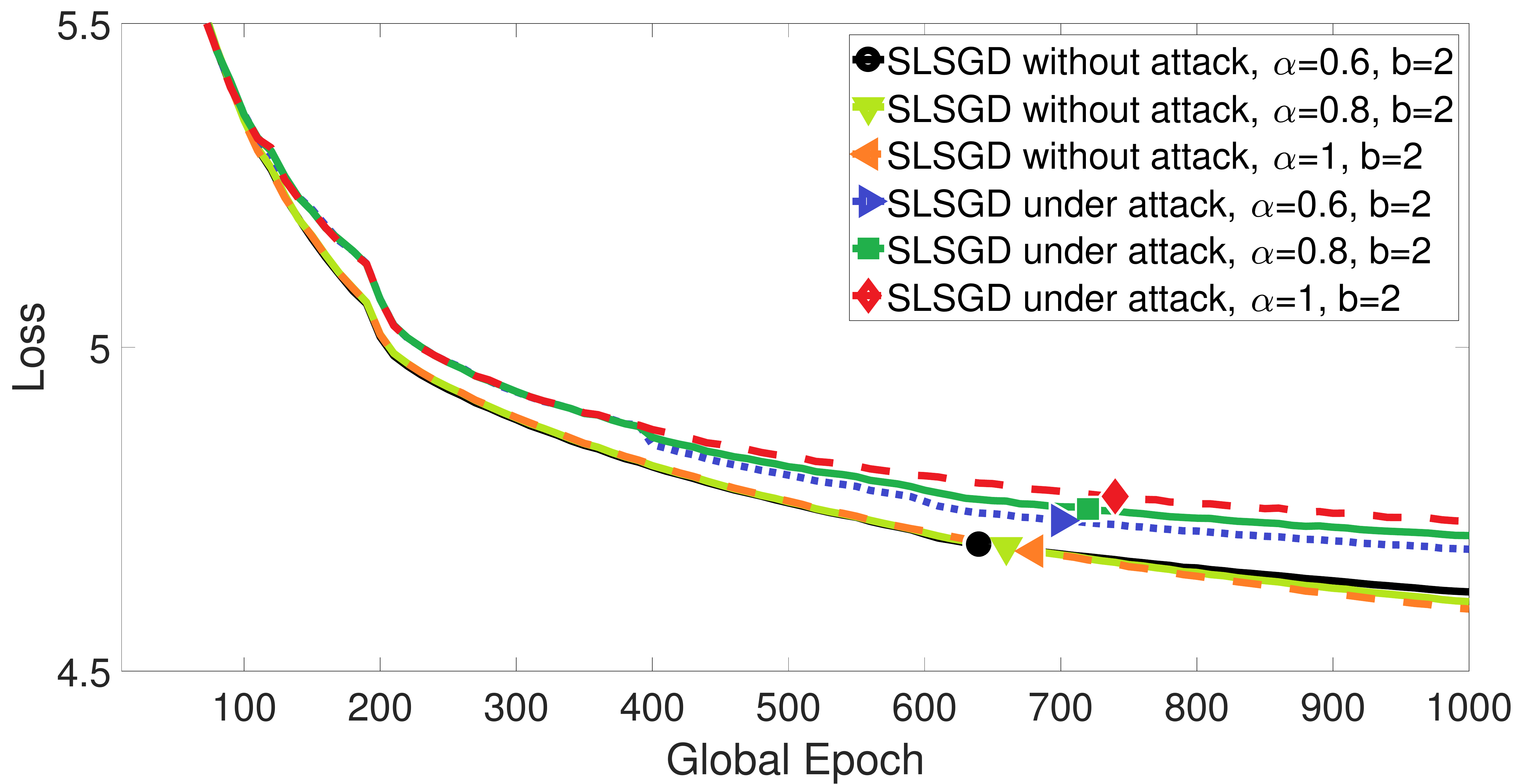}
\caption{(Zoomed) LSTM experiment on WikiText-2. Convergence on training data with balanced partition, under attack. In each epoch, we guarantee that $q=2$ of the $k=10$ selected workers are training on poisoned data. Each epoch is a full pass of the local training data. We take learning rate $\gamma=20$. Legend ``SLSGD, $\alpha=0.8, b=2$'' means that SLSGD takes $\trmean_2$ for aggregation, and the initial $\alpha=1$ decays by the factor of $0.8$ at the $600$th epoch. Note that SLSGD with $\alpha=1$ and $b=0$ is the baseline \textit{FedAvg}. Note that we fix the random seeds. Thus, before $\alpha$ decays at the $600$th epoch, results with the same $\gamma$ and $b$ are the same.}
\label{fig:lstm_balanced_byz_2}
\end{figure*}

\newpage
\section{Proofs}

\setcounter{lemma}{0}
\setcounter{theorem}{0}
\setcounter{corollary}{0}

\begin{theorem}
We take $\gamma \leq \min \left(\frac{1}{L}, 2\right)$. After $T$ epochs, Algorithm~\ref{alg:robust_fed} with Option I converges to a global optimum:
\begin{align*}
    &\E \left[ F(x_T) - F(x_*) \right] 
    \leq \left( 1-\alpha + \alpha (1 - \frac{\gamma}{2})^{H_{min}} \right)^T \left[ F(x_0) - F(x_*) \right] \\
    &\quad + \left[ 1 -  \left( 1-\alpha + \alpha (1 - \frac{\gamma}{2})^{H_{min}} \right)^T \right] \mathcal{O}\left( V_1 + \left( 1 + \frac{1}{k} - \frac{1}{n} \right) V_2 \right),
\end{align*}
where $V_2 = \max_{t \in \{0, T-1\}, h \in \{0, H^i_t - 1\}, i \in [n]} \|x_{t, h}^i - x_*\|^2$.
\end{theorem}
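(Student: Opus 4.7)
The plan is to reduce the $T$-epoch bound to a single per-epoch recursion of the form
\[
\E\bigl[F(x_t) - F(x_*)\bigr] \;\leq\; r \cdot \E\bigl[F(x_{t-1}) - F(x_*)\bigr] \;+\; C,
\]
with $r = 1 - \alpha + \alpha (1-\gamma/2)^{H_{min}}$ and $C = O\bigl(\alpha\gamma[V_1 + (1+1/k-1/n)V_2]\bigr)$. Unrolling and summing the geometric series $\sum_{t=0}^{T-1} r^t = (1-r^T)/(1-r)$ reproduces the stated bound, since $1-r = \alpha\bigl(1-(1-\gamma/2)^{H_{min}}\bigr) = \Theta(\alpha\gamma)$ cancels the $\alpha\gamma$ appearing in $C$. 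So the whole proof reduces to proving the one-epoch recursion, which I split into three pieces: (i) local SGD contraction, (ii) aggregation over the random subset $S_t$, and (iii) the moving-average update.

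For (i), feeding $y = x_{t,h+1}^i = x_{t,h}^i - \gamma \nabla f(x_{t,h}^i; z_{t,h+1}^i)$ into the upper Taylor bound of Assumption~\ref{asm:smooth_cvx} with $\gamma L \leq 1$ gives $f(x_{t,h+1}^i;z) \leq f(x_{t,h}^i;z) - (\gamma/2)\|\nabla f(x_{t,h}^i;z)\|^2$. Combining with the lower Taylor bound at $x_*$ and absorbing the $-(\mu/2)\|x_{t,h}^i - x_*\|^2$ term into a $V_2$-proportional additive overhead (needed because $\mu$ may be $\leq 0$), one obtains after taking expectation over $z$ and using $\|\nabla f\|^2 \leq V_1$ from Assumption~\ref{asm:variance},
\[
\E\bigl[F^i(x_{t,h+1}^i) - F^i(x_*)\bigr] \;\leq\; (1-\gamma/2)\,\E\bigl[F^i(x_{t,h}^i) - F^i(x_*)\bigr] + O(\gamma V_1 + \gamma V_2).
\]
Iterating for $h = 0,\ldots,H_t^i - 1$ and using $(1-\gamma/2)^{H_t^i} \leq (1-\gamma/2)^{H_{min}}$ (valid since $\gamma \leq 2$ keeps the base in $[0,1]$ and $H_t^i \geq H_{min}$) yields the per-worker contraction of the initial optimality gap, plus an $O(V_1+V_2)$ additive term arising from the geometric tail $\sum_h (1-\gamma/2)^h \leq 2/\gamma$.

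For (ii) and (iii), the averaged model $x'_t = \frac{1}{k}\sum_{i\in S_t} x_{t,H_t^i}^i$ is handled in two pieces. A standard second-moment calculation for sampling $k$ elements without replacement from $[n]$ converts the worker variance into the $(1/k - 1/n)V_2$ factor. To relate $F(x'_t)$ to the average of $F(x_{t,H_t^i}^i)$ over $i \in S_t$, I invoke the $L$-smoothness upper Taylor once more, paying $O(L V_2)$ from $\frac{1}{k}\sum_i\|x_{t,H_t^i}^i - x'_t\|^2$. The moving-average update $x_t = (1-\alpha)x_{t-1} + \alpha x'_t$ is likewise handled by the upper Taylor, giving $F(x_t) - F(x_*) \leq (1-\alpha)[F(x_{t-1}) - F(x_*)] + \alpha[F(x'_t) - F(x_*)] + O(L\alpha(1-\alpha) V_2)$. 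Plugging in the worker-level bound then yields the target per-epoch recursion.

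The main obstacle is producing a single contraction rate $(1-\gamma/2)$ that is uniform across the strongly convex ($\mu>0$), convex ($\mu=0$), and mildly non-convex ($\mu<0$) regimes allowed by Assumption~\ref{asm:smooth_cvx}. The trick is to route any $-\mu$-dependent term into an additive $V_2$-proportional error rather than into the multiplicative rate; this is precisely why the uniform iterate-distance bound $V_2 = \max\|x_{t,h}^i - x_*\|^2$ must enter the final statement, and why the coefficient of $V_2$ picks up an extra constant on top of the sampling factor $1/k - 1/n$. A secondary subtlety is the heterogeneous $H_t^i$: while upper-bounding $(1-\gamma/2)^{H_t^i}$ by $(1-\gamma/2)^{H_{min}}$ is costless for the contraction, one still has to sum the additive noise contributions over the actual (possibly larger) $H_t^i$ in order to keep the $V_1$ and $V_2$ prefactors $O(1)$, relying on the geometric-tail identity $\sum_{h=0}^{\infty}(1-\gamma/2)^h \leq 2/\gamma$.
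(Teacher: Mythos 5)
Your proposal follows essentially the same route as the paper's proof: a per-epoch recursion with rate $1-\alpha+\alpha(1-\frac{\gamma}{2})^{H_{min}}$, obtained from the local descent step converted into a function-gap contraction by dumping the $\mu$-dependent term into an $O(V_2)$ slack (the paper does this via the regularized function $G_*(x)=F(x)+\frac{\rho}{2}\|x-x_*\|^2$), the without-replacement sampling variance giving the $(\frac{1}{k}-\frac{1}{n})V_2$ term, a convex-combination bound for the moving average, and geometric unrolling over $T$ epochs. The only deviations are bookkeeping-level — you track $F^i$ locally instead of $F$, you credit the averaging and moving-average inequalities to the $L$-upper rather than the $\mu$-lower Taylor bound, and the extra $\gamma$ you claim in the per-epoch constant does not actually survive your own geometric-tail summation — all of which are absorbed by the same $\mathcal{O}(\cdot)$ looseness the paper's own argument relies on.
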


\begin{proof}
For convenience, we ignore the random sample $z \sim \mathcal{D}^i$ in our notations. Thus, $f^i(x_{t,h}^i)$ represents $f(x_{t, h}^i; z_{t, h}^i)$, where $z_{t, h}^i \sim \mathcal{D}^i$.
Furthermore, we define $F^i(x) = \E_{z \sim \mathcal{D}^i} f(x; z)$.

Thus, Line 10 in Algorithm~\ref{alg:robust_fed} can be rewritten into
\begin{align*}
x_{t, h}^i = x_{t, h-1}^i - \gamma \nabla f^i_t(x_{t, h-1}^i).
\end{align*}
Using $L$-smoothness of $F(x)$, we have 
\begin{align*}
& F(x_{t, h}^i)  \\
&\leq F(x_{t, h-1}^i) - \gamma \ip{\nabla F(x_{t, h-1}^i}{\nabla f^i(x_{t, h-1}^i)} + \frac{L\gamma^2}{2} \left\| \nabla f^i(x_{t, h-1}^i) \right\|^2 \\
&\leq F(x_{t, h-1}^i) - \gamma \ip{\nabla F(x_{t, h-1}^i}{\nabla f^i(x_{t, h-1}^i)} + \frac{\gamma}{2} \left\| \nabla f^i(x_{t, h-1}^i) \right\|^2 \\
&\leq F(x_{t, h-1}^i) - \frac{\gamma}{2} \left\| \nabla F(x_{t, h-1}^i) \right\|^2 + \frac{\gamma}{2} \left\| \nabla F(x_{t, h-1}^i) - f^i(x_{t, h-1}^i) \right\|^2.
\end{align*}

It is easy to check that $\exists \rho \geq 0$, $G_*(x) = F(x) + \frac{\rho}{2} \|x - x_*\|^2$ is $(\rho + \mu)$-strongly convex, where $\rho + \mu \geq 1$.
Thus, we have 
\begin{align*}
    F(x) - F(x_*) \leq G_*(x) - G_*(x_*) \leq \frac{\|\nabla G_*(x)\|^2}{2(\rho + \mu)} 
    \leq \frac{\|\nabla F(x)\|^2 + \rho^2 \|x - x_*\|^2}{\rho + \mu}.
\end{align*}

Taking expectation on both sides, conditional on $x_{t, h-1}^i$, we have
\begin{align*}
& \E\left[ F(x_{t, h}^i) - F(x_*) \right] \\
&\leq F(x_{t, h-1}^i) - F(x_*) - \frac{\gamma}{2} \left\| \nabla F(x_{t, h-1}^i) \right\|^2 + \gamma V_1 \\
&\leq F(x_{t, h-1}^i) - F(x_*) - \frac{\gamma}{2} (\rho + \mu) \left[ F(x_{t, h-1}^i) - F(x_*) \right] + \frac{\gamma \rho^2}{2} \|x_{t, h-1}^i - x_*\|^2 + \gamma V_1 \\
&\leq (1 - \frac{\gamma}{2}) \left[ F(x_{t, h-1}^i) - F(x_*) \right] + \gamma \mathcal{O}(V_1 + V_2).
\end{align*}

By telescoping and taking total expectation, we have
\begin{align*}
    &\E\left[ F(x_{t, H^i_t}^i) - F(x_*) \right] \\
    &\leq (1 - \frac{\gamma}{2})^{H^i_t} \left[ F(x_{t, 0}^i) - F(x_*) \right] + \frac{1 - (1 - \frac{\gamma}{2})^{H^i_t}}{1 - (1 - \frac{\gamma}{2})} \gamma \mathcal{O}(V_1 + V_2) \\
    &\leq (1 - \frac{\gamma}{2})^{H_{min}} \left[ F(x_{t, 0}^i) - F(x_*) \right] + \mathcal{O}(V_1 + V_2).
\end{align*}

$S_t \subseteq [n]$ is a subset of devices randomly sampled from all the $n$ devices without placement. Thus, we have
\begin{align*}
    &F\left( \frac{1}{k} \sum_{i \in S_t} x_{t, H^i_t}^i  \right) \\
    &\leq F\left( \frac{1}{n} \sum_{i \in [n]} x_{t, H^i_t}^i  \right) 
     + \ip{\nabla F\left( \frac{1}{n} \sum_{i \in [n]} x_{t, H^i_t}^i  \right)}{\frac{1}{k} \sum_{i \in S_t} x_{t, H^i_t}^i - \frac{1}{n} \sum_{i \in [n]} x_{t, H^i_t}^i} \\
    &\quad + \frac{L}{2} \left\| \frac{1}{k} \sum_{i \in S_t} x_{t, H^i_t}^i - \frac{1}{n} \sum_{i \in [n]} x_{t, H^i_t}^i \right\|^2 \\
    &\leq F\left( \frac{1}{n} \sum_{i \in [n]} x_{t, H^i_t}^i  \right)
     + \frac{1}{2} \left\| \nabla F\left( \frac{1}{n} \sum_{i \in [n]} x_{t, H^i_t}^i  \right) \right\|^2 
     + \frac{L+1}{2} \left\| \frac{1}{k} \sum_{i \in S_t} x_{t, H^i_t}^i - \frac{1}{n} \sum_{i \in [n]} x_{t, H^i_t}^i \right\|^2 \\
    &\leq F\left( \frac{1}{n} \sum_{i \in [n]} x_{t, H^i_t}^i  \right) + \mathcal{O}(V_1) + \left( \frac{1}{k} - \frac{1}{n} \right) \frac{1}{n-1} \sum_{i \in [n]} \frac{L+1}{2} \left\| x_{t, H^i_t}^i - \frac{1}{n} \sum_{j \in [n]} x_{t, H^j_t}^j \right\|^2 \\
    &\leq F\left( \frac{1}{n} \sum_{i \in [n]} x_{t, H^i_t}^i  \right) + \mathcal{O}(V_1) + \left( \frac{1}{k} - \frac{1}{n} \right) \mathcal{O}(V_2).
\end{align*}

On the server, after aggregation, conditional on $x_{t-1}$, we have 
\begin{align*}
    &\E \left[ F(x'_t) - F(x_*) \right] \\
    &\leq \E \left[ G_*(\frac{1}{n} \sum_{i \in [n]} x_{t, H^i_t}^i)  - F(x_*) \right] + \E \left[ F\left( \frac{1}{k} \sum_{i \in S_t} x_{t, H^i_t}^i  \right) - F\left( \frac{1}{n} \sum_{i \in [n]} x_{t, H^i_t}^i  \right) \right]\\
    &\leq \E \left[ \frac{1}{n} \sum_{i \in [n]} G_*( x_{t, H^i_t}^i )  - F(x_*) \right] + \mathcal{O}(V_1) + \left( \frac{1}{k} - \frac{1}{n} \right) \mathcal{O}(V_2)\\
    &\leq \E \left[ \frac{1}{n} \sum_{i \in [n]} F( x_{t, H^i_t}^i ) - F(x_*) \right] + \mathcal{O}\left( V_1 + \left( 1 + \frac{1}{k} - \frac{1}{n} \right) V_2 \right)  \\
    &\leq \frac{1}{n} \sum_{i \in [n]} (1 - \frac{\gamma}{2})^{H_{min}} \left[ F(x_{t, 0}^i) - F(x_*) \right] + \mathcal{O}\left( V_1 + \left( 1 + \frac{1}{k} - \frac{1}{n} \right) V_2 \right) \\
    &\leq (1 - \frac{\gamma}{2})^{H_{min}} \left[ F(x_{t-1}) - F(x_*) \right] + \mathcal{O}\left( V_1 + \left( 1 + \frac{1}{k} - \frac{1}{n} \right) V_2 \right). 
\end{align*}

We define $G_{t-1}(x) = F(x) + \frac{\rho}{2} \|x - x_{t-1}\|^2$, which is convex. Then, we have
\begin{align*}
    &\E \left[ F(x_t) - F(x_*) \right] \\
    &\leq \E \left[ G_{t-1}(x_t) - F(x_*) \right] \\
    &\leq \E \left[ (1-\alpha) G_{t-1}(x_{t-1}) + \alpha G_{t-1}(x'_{t}) - F(x_*) \right] \\
    &\leq (1-\alpha) \left[ F(x_{t-1}) - F(x_*) \right] + \alpha \E\left[ F(x'_t) - F(x_*) + \frac{\rho}{2} \|x'_t - x_{t-1}\|^2\right] \\
    &\leq (1-\alpha) \left[ F(x_{t-1}) - F(x_*) \right] + \alpha (1 - \frac{\gamma}{2})^{H_{min}} \left[ F(x_{t-1}) - F(x_*) \right] + \alpha \mathcal{O}(V_1 + V_2)\\
    &\leq \left[ 1-\alpha + \alpha (1 - \frac{\gamma}{2})^{H_{min}} \right] \left[ F(x_{t-1}) - F(x_*) \right] + \alpha \mathcal{O}\left( V_1 + \left( 1 + \frac{1}{k} - \frac{1}{n} \right) V_2 \right).
\end{align*}

After $T$ epochs, by telescoping and taking total expectation, we have
\begin{align*}
    &\E \left[ F(x_T) - F(x_*) \right] \\
    &\leq \left[ 1-\alpha + \alpha (1 - \frac{\gamma}{2})^{H_{min}} \right]^T \left[ F(x_0) - F(x_*) \right] \\
    &\quad + \left[ 1 -  \left( 1-\alpha + \alpha (1 - \frac{\gamma}{2})^{H_{min}} \right)^T \right] \mathcal{O}\left( V_1 + \left( 1 + \frac{1}{k} - \frac{1}{n} \right) V_2 \right).
\end{align*}

\end{proof}

\begin{theorem}
Assume that additional to the $n$ normal workers, there are $q$ workers training on poisoned data, where $q \ll n$, and $2q \leq 2b < k$. 
We take $\gamma \leq \min \left(\frac{1}{L}, 2\right)$. After $T$ epochs, Algorithm~\ref{alg:robust_fed} with Option II converges to a global optimum:
\begin{align*}
    &\E \left[ F(x_T) - F(x_*) \right] 
    \leq \left( 1-\alpha + \alpha (1 - \frac{\gamma}{2})^{H_{min}} \right)^T \left[ F(x_0) - F(x_*) \right] \\
    & \quad + \left[ 1 -  \left( 1-\alpha + \alpha (1 - \frac{\gamma}{2})^{H_{min}} \right)^T \right] \left[  \mathcal{O}(\beta V_2) + \mathcal{O}(V_1) \right],
\end{align*}
where $V_2 = \max_{t \in \{0, T-1\}, h \in \{0, H^i_t - 1\}, i \in [n]} \|x_{t, h}^i - x_*\|^2$, $\beta = 1 + \frac{1}{k-q} - \frac{1}{n} + \frac{k (k+b)}{(k-b-q)^2}$.
\end{theorem}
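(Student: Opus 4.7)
The plan is to reuse the skeleton of the proof of Theorem~1 (Option I without poisoning) essentially verbatim, and insert one new ingredient: a coordinate-wise robustness bound for $\trmean_b$ under $q\le b$ arbitrary corruptions. The local analysis on honest workers is entirely unchanged, since poisoned workers only affect the server aggregation step. So for every honest $i\in S_t$ I still obtain, by repeating the telescoping argument from Theorem~1,
\begin{align*}
\E[F(x_{t,H^i_t}^i) - F(x_*)] \le (1-\tfrac{\gamma}{2})^{H_{min}}[F(x_{t-1}) - F(x_*)] + \mathcal{O}(V_1 + V_2),
\end{align*}
together with $\|x_{t,H^i_t}^i - x_*\|^2 = \mathcal{O}(V_2)$ from Assumption~\ref{asm:variance} and the definition of $V_2$.

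The new ingredient is a lemma of the following form. Let $S_t^h \subseteq S_t$ denote the honest workers activated in epoch $t$, so $|S_t^h| = k-q$, and let $\bar x_t^h = \frac{1}{k-q}\sum_{i\in S_t^h} x_{t,H^i_t}^i$. I claim
\begin{align*}
\|\trmean_b(\{x_{t,H^i_t}^i : i\in S_t\}) - \bar x_t^h\|^2 \le \mathcal{O}\!\left(\tfrac{k(k+b)}{(k-b-q)^2}\, V_2\right).
\end{align*}
Working coordinate by coordinate: since $b\ge q$, after sorting the $k$ scalars per coordinate and discarding the top/bottom $b$, at most $q$ corrupted values can survive, and every surviving value (corrupted or not) is sandwiched between two honest values that lie within the range of $V_2$-bounded honest iterates. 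Averaging the $k-2b$ surviving values and comparing with the honest mean $\bar x_t^h$ gives a per-coordinate bound proportional to $\tfrac{k+b}{k-b-q}$ times the spread of the honest values; squaring and summing across coordinates yields the stated $\frac{k(k+b)}{(k-b-q)^2} V_2$ factor.

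With this lemma in place, the remainder of the proof is a cosmetic edit of Theorem~1. Applying $L$-smoothness of $F$ first around $\bar x_t^h$ and then using the trimmed-mean lemma,
\begin{align*}
F(x'_t) \le F(\bar x_t^h) + \mathcal{O}(V_1) + \mathcal{O}\!\left(\tfrac{k(k+b)}{(k-b-q)^2} V_2\right).
\end{align*}
The honest mean $\bar x_t^h$ is itself a uniform sample of size $k-q$ from $n$ honest workers, so the same subsampling step that produced the $(\tfrac{1}{k}-\tfrac{1}{n})V_2$ term in Theorem~1 now produces $(\tfrac{1}{k-q}-\tfrac{1}{n})V_2$. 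Summing the three contributions gives precisely the factor $\beta = 1 + \tfrac{1}{k-q} - \tfrac{1}{n} + \tfrac{k(k+b)}{(k-b-q)^2}$. Finally, the moving-average step $x_t = (1-\alpha)x_{t-1} + \alpha x'_t$, combined with the auxiliary strongly convex majorant $G_{t-1}(x) = F(x) + \tfrac{\rho}{2}\|x - x_{t-1}\|^2$ used in Theorem~1, produces the one-step contraction
\begin{align*}
\E[F(x_t) - F(x_*)] \le \bigl[1-\alpha + \alpha(1-\tfrac{\gamma}{2})^{H_{min}}\bigr]\E[F(x_{t-1}) - F(x_*)] + \alpha\bigl[\mathcal{O}(V_1)+\mathcal{O}(\beta V_2)\bigr],
\end{align*}
and telescoping across $T$ epochs gives the claimed inequality.

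The main obstacle is the trimmed-mean robustness lemma with the specific constant $\frac{k(k+b)}{(k-b-q)^2}$. Getting the sharp dependence on $k$, $b$, and $q$ (rather than a loose bound of the form $\mathcal{O}(V_2)$) requires a careful coordinate-wise sandwich argument and a tight count of how many honest values survive on each side of the trimming window; everything else is a mechanical rewrite of the proof of Theorem~1 with $(1+\tfrac{1}{k}-\tfrac{1}{n})$ replaced by $\beta$.
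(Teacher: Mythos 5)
Your proposal is correct and follows essentially the same route as the paper's proof: a coordinate-wise sandwich bound showing $\|\trmean_b(\{x^i_{t,H^i_t}: i\in S_t\}) - \bar x_t^h\|^2 = \mathcal{O}\bigl(\tfrac{k(k+b)}{(k-b-q)^2}V_2\bigr)$, then $L$-smoothness of $F$ around the honest mean, the Theorem~1 subsampling step with $k-q$ honest workers giving $(\tfrac{1}{k-q}-\tfrac{1}{n})V_2$, and the same moving-average contraction and telescoping. The only cosmetic difference is in how the trimmed-mean constant is assembled (the paper splits the worst-case honest-subset average into a full-average term plus a trimmed term via Cauchy--Schwarz to get $\tfrac{2(k-q)(k+b-q)}{(k-b-q)^2}$, which matches your factor up to constants absorbed in $\mathcal{O}(\cdot)$).
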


\begin{proof}
First, we analyze the robustness of trimmed mean. 
Assume that among the scalar sequence $\{\tilde{v}_i: i \in [k]\}$, $q_1$ elements are poisoned. Without loss of generality, we denote the remaining correct values as $\{v_1, \ldots, v_{k-q_1}\}$. Thus, for $q_1 < b \leq \lceil k/2 \rceil - 1$, $v_{(b-q_1+i):(k-q_1)} \leq \tilde{v}_{(b+i):k} \leq v_{(b+i):(k-q_1)}$, for $\forall i \in [k-2b]$, where $\tilde{v}_{(b+i):k}$ is the $(b+i)$th smallest element in $\{\tilde{v}_i: i \in [k]\}$, and $v_{(b+i):(k-q_1)}$ is the $(b+i)$th smallest element in $\{v_1, \ldots, v_{k-q_1}\}$.

Define $\bar{v} = \frac{1}{k - q_1} \sum_{i \in [k-q_1]} v_i$. We have

\begin{align*}
\quad&\sum_{i=b-q_1+1}^{k-q_1-b} (v_{i:(k-q_1)}-\bar{v}) \leq \sum_{i=b+1}^{k-b} (\tilde{v}_{i:k}-\bar{v}) \leq \sum_{i=b+1}^{k-b} (v_{i:(k-q_1)}-\bar{v}) \\
\Rightarrow & \frac{\sum_{i=1}^{k-q_1-b} (v_{i:(k-q_1)}-\bar{v})}{k-b-q_1} \leq \frac{\sum_{i=b+1}^{k-b} (\tilde{v}_{i:k}-\bar{v})}{k-2b} \leq \frac{\sum_{i=b+1}^{k-q_1} (v_{i:(k-q_1)}-\bar{v})}{k-b-q_1} \\
\Rightarrow& \left[ \frac{\sum_{i=b+1}^{k-b} (\tilde{v}_{i:k}-\bar{v})}{k-2b} \right]^2 \leq \max \left\{\left[ \frac{\sum_{i=1}^{k-q_1-b} (v_{i:(k-q_1)}-\bar{v})}{k-b-q_1} \right]^2, \left[ \frac{\sum_{i=b+1}^{k-q_1} (v_{i:(k-q_1)}-\bar{v})}{k-b-q_1} \right]^2 \right\}.
\end{align*}
Thus, we have 
\begin{align*}
&\left[ \trmean_b(\{\tilde{v}_i: i \in [k]\}) - \bar{v} \right]^2 \\
&= \left[ \frac{\sum_{i=b+1}^{k-b} \tilde{v}_{i:k}}{k-2b} - \bar{v} \right]^2 \\
&\leq \max \left\{\left[ \frac{\sum_{i=1}^{k-q_1-b} (v_{i:(k-q_1)}-\bar{v})}{k-b-q_1} \right]^2, \left[ \frac{\sum_{i=b+1}^{k-q_1} (v_{i:(k-q_1)}-\bar{v})}{k-b-q_1} \right]^2 \right\}.
\end{align*}
Note that for arbitrary subset $\mathcal{S} \subseteq [k-q_1]$ with cardinality $|\mathcal{S}| = k-b-q_1$, we have the following bound:
\begin{align*}
&\left[ \frac{\sum_{i \in \mathcal{S}} (v_{i:(k-q_1)}-\bar{v})}{k-b-q_1 } \right]^2 \\
&= \left[ \frac{\sum_{i \in [k-q_1]} (v_{i:(k-q_1)}-\bar{v}) - \sum_{i \notin \mathcal{S}} (v_{i:(k-q_1)}-\bar{v})}{k-b-q_1} \right]^2 \\
&\leq 2\left[ \frac{\sum_{i \in [k-q_1]} (v_{i:(k-q_1)}-\bar{v})}{k-b-q_1} \right]^2 + 2\left[ \frac{\sum_{i \notin \mathcal{S}} (v_{i:(k-q_1)}-\bar{v})}{k-b-q_1} \right]^2 \\
&= \frac{2(k-q_1)^2}{(k-b-q_1)^2} \left[ \frac{\sum_{i \in [k-q_1]} (v_{i:(k-q_1)}-\bar{v})}{k-q_1} \right]^2 + \frac{2b^2}{(k-b-q_1)^2} \left[ \frac{\sum_{i \notin \mathcal{S}} (v_{i:(k-q_1)}-\bar{v})}{b} \right]^2 \\
&\leq \frac{2(k-q_1)^2}{(k-b-q_1)^2} \left[ \frac{\sum_{i \in [k-q_1]} (v_{i:(k-q_1)}-\bar{v})}{k-q_1} \right]^2 
+ \frac{2b^2}{(k-b-q_1)^2} \frac{\sum_{i \notin \mathcal{S}} (v_{i:(k-q_1)}-\bar{v})^2}{b}  \\
&\leq \frac{2(k-q_1)^2}{(k-b-q_1)^2} \left[ \frac{\sum_{i \in [k-q_1]} (v_{i:(k-q_1)}-\bar{v})}{k-q_1} \right]^2 
+ \frac{2b^2}{(k-b-q_1)^2} \frac{\sum_{i \in [k-q_1]} (v_{i:(k-q_1)}-\bar{v})^2}{b} \\
&\leq \frac{2(k-q_1)^2}{(k-b-q_1)^2}  \frac{\sum_{i \in [k-q_1]} (v_{i:(k-q_1)}-\bar{v})^2}{k-q_1} 
+ \frac{2b^2}{(k-b-q_1)^2} \frac{\sum_{i \in [k-q_1]} (v_{i:(k-q_1)}-\bar{v})^2}{b} \\
&\leq \frac{2(k-q_1) (k+b-q_1)}{(k-b-q_1)^2}  \frac{\sum_{i \in [k-q_1]} (v_{i:(k-q_1)}-\bar{v})^2}{k-q_1}.
\end{align*}

In the worker set $S_t$, there are $q_1$ poisoned workers. We denote $C_t \subseteq S_t$ as the set of normal workers with cardinality $|C_t| = k-q_1$.

Thus, we have
\begin{align*}
&\left\| \trmean_b(\{x^i_{t,H^i_t}: i \in S_t\}) - \frac{1}{k-q_1} \sum_{i \in C_t} x^i_{t,H^i_t} \right\|^2 \\
&\leq \frac{2(k-q_1) (k+b-q_1)}{(k-b-q_1)^2}  \frac{\sum_{i \in C_t} \left\| x^i_{t,H^i_t} - \frac{1}{k-q_1} \sum_{i \in C_t} x^i_{t,H^i_t} \right\|^2}{k-q_1} \\
&\leq \frac{(k-q_1) (k+b-q_1)}{(k-b-q_1)^2}  \mathcal{O}(V_2).
\end{align*}

Using $L$-smoothness, we have
\begin{align*}
    &F\left( \trmean_b(\{x^i_{t,H^i_t}: i \in S_t\}) \right) \\
    &\leq F\left( \frac{1}{k-q_1} \sum_{i \in C_t} x^i_{t,H^i_t} \right) \\
    &\quad + \ip{\nabla F\left( \frac{1}{k-q_1} \sum_{i \in C_t} x^i_{t,H^i_t} \right)}{\trmean_b(\{x^i_{t,H^i_t}: i \in S_t\}) - \frac{1}{k-q_1} \sum_{i \in C_t} x^i_{t,H^i_t}} \\
    &\quad +\frac{L}{2} \left\| \trmean_b(\{x^i_{t,H^i_t}: i \in S_t\}) - \frac{1}{k-q_1} \sum_{i \in C_t} x^i_{t,H^i_t} \right\|^2 \\
    &\leq F\left( \frac{1}{k-q_1} \sum_{i \in C_t} x^i_{t,H^i_t} \right) + 2\left\| \nabla F\left( \frac{1}{k-q_1} \sum_{i \in C_t} x^i_{t,H^i_t} \right) \right\|^2 \\
    &\quad  + \left(\frac{L}{2} + 2 \right)
    \left\| \trmean_b(\{x^i_{t,H^i_t}: i \in S_t\}) - \frac{1}{k-q_1} \sum_{i \in C_t} x^i_{t,H^i_t} \right\|^2 \\
    &\leq F\left( \frac{1}{k-q_1} \sum_{i \in C_t} x^i_{t,H^i_t} \right) + \frac{k (k+b)}{(k-b-q)^2}  \mathcal{O}(V_2) + \mathcal{O}(V_1).
\end{align*}

Combining with Theorem~\ref{thm:convergence}, on the server, after aggregation using trimmed mean, conditional on $x_{t-1}$, we have 
\begin{align*}
    &\E \left[ F(x'_t) - F(x_*) \right] \\
    &=\E \left[ F\left( \trmean_b(\{x^i_{t,H^i_t}: i \in S_t\}) \right) - F(x_*) \right] \\
    &=\E \left[ F\left( \frac{1}{k-q_1} \sum_{i \in C_t} x^i_{t,H^i_t} \right) + F\left( \trmean_b(\{x^i_{t,H^i_t}: i \in S_t\}) \right) - F\left( \frac{1}{k-q_1} \sum_{i \in C_t} x^i_{t,H^i_t} \right) - F(x_*) \right] \\
    &\leq \E \left[ F\left( \frac{1}{k-q_1} \sum_{i \in C_t} x^i_{t,H^i_t} \right)  - F(x_*) \right] + \frac{k (k+b)}{(k-b-q)^2}  \mathcal{O}(V_2) + \mathcal{O}(V_1) \\
    &\leq \E \left[ G_*\left( \frac{1}{n} \sum_{i \in [n]} x^i_{t,H^i_t} \right)  - F(x_*) \right] + \left( 1 + \frac{1}{k-q} - \frac{1}{n} + \frac{k (k+b)}{(k-b-q)^2} \right)    \mathcal{O}(V_2) + \mathcal{O}(V_1)\\
    &\leq \E \left[ \frac{1}{n} \sum_{i \in [n]} G_*( x_{t, H^i_t}^i )  - F(x_*) \right] + \left( 1 + \frac{1}{k-q} - \frac{1}{n} + \frac{k (k+b)}{(k-b-q)^2} \right)    \mathcal{O}(V_2) + \mathcal{O}(V_1)\\
    &\leq \E \left[ \frac{1}{n} \sum_{i \in [n]} F( x_{t, H^i_t}^i ) - F(x_*) \right] + \left( 1 + \frac{1}{k-q} - \frac{1}{n} + \frac{k (k+b)}{(k-b-q)^2} \right)    \mathcal{O}(V_2) + \mathcal{O}(V_1)\\
    &\leq \frac{1}{n} \sum_{i \in [n]} (1 - \frac{\gamma}{2})^{H_{min}} \left[ F(x_{t, 0}^i) - F(x_*) \right] + \left( 1 + \frac{1}{k-q} - \frac{1}{n} + \frac{k (k+b)}{(k-b-q)^2} \right)    \mathcal{O}(V_2) + \mathcal{O}(V_1)\\
    &\leq (1 - \frac{\gamma}{2})^{H_{min}} \left[ F(x_{t-1}) - F(x_*) \right] + \left( 1 + \frac{1}{k-q} - \frac{1}{n} + \frac{k (k+b)}{(k-b-q)^2} \right)    \mathcal{O}(V_2) + \mathcal{O}(V_1). 
\end{align*}

After $T$ epochs, by telescoping and taking total expectation, we have
\begin{align*}
    &\E \left[ F(x_T) - F(x_*) \right] \\
    &\leq \left[ 1-\alpha + \alpha (1 - \frac{\gamma}{2})^{H_{min}} \right]^T \left[ F(x_0) - F(x_*) \right] \\
    &\quad + \left[ 1 -  \left[ 1-\alpha + \alpha (1 - \frac{\gamma}{2})^{H_{min}} \right]^T \right] \left[ \left( 1 + \frac{1}{k-q} - \frac{1}{n} + \frac{k (k+b)}{(k-b-q)^2} \right)    \mathcal{O}(V_2) + \mathcal{O}(V_1) \right].
\end{align*}

\end{proof}

\end{document}